\documentclass{article}
\PassOptionsToPackage{numbers,sort&compress}{natbib}
\usepackage[main,preprint]{neurips_2026}
\usepackage[utf8]{inputenc}
\usepackage[T1]{fontenc}

\usepackage{url}
\usepackage{multirow}
\usepackage{booktabs}
\usepackage{microtype}
\usepackage{xcolor}
\usepackage{graphicx}
\usepackage{multirow}
\usepackage{wrapfig}
\usepackage{placeins}

\usepackage{amsmath,amssymb,amsfonts,amsthm,bm}
\usepackage{makecell}
\usepackage{mathtools}
\usepackage{nicefrac}
\usepackage{enumitem}

\usepackage{booktabs}
\usepackage{tabularx}
\usepackage{array}
\usepackage{ragged2e}

\usepackage{algorithm}
\usepackage{algpseudocode}
\usepackage{amsmath}

\usepackage{colortbl}
\usepackage{overpic}

\definecolor{best}{HTML}{FFC7CE}   %
\definecolor{second}{HTML}{FFDCB2} %
\definecolor{third}{HTML}{FFFACD}  %

\usepackage{array}
\usepackage{longtable}
\usepackage{adjustbox}

\newcommand{\best}[1]{\textbf{#1}}

\newcommand{\na}{--}

\makeatletter
\newcommand{\CenterState}[1]{%
  \Statex
  \hskip-\ALG@thistlm
  \parbox{\dimexpr\linewidth+\ALG@thistlm\relax}{\centering #1\par}%
}
\makeatother

\newcolumntype{Y}{>{\RaggedRight\arraybackslash}X}

\algrenewcommand\algorithmicrequire{\textbf{Input:}}
\algrenewcommand\algorithmicensure{\textbf{Output:}}

\newtheorem{proposition}{Proposition}
\newtheorem{remark}{Remark}
\newtheorem{definition}{Definition}

\newcommand{\R}{\mathbb{R}}

\usepackage{hyperref}
\hypersetup{
  hidelinks,
  pdftitle={Birth-Death Structural Learning for 3D Gaussian Splatting},
  pdfauthor={Quan Tran Hong, Long Nguyen-Chi, Binh T. Nguyen}
}

\title{Birth-Death Structural Learning for 3D Gaussian Splatting\thanks{Code is available at: \url{https://github.com/quanth238/birth-death-3dgs}.}}

\author{%
  \normalfont
  \begin{tabular}[t]{@{}c@{\hspace{1em}}c@{\hspace{1em}}c@{}}
    \textbf{Quan Tran Hong} & \textbf{Long Nguyen-Chi} & \textbf{Binh T. Nguyen} \\
    \texttt{26quan.th@vinuni.edu.vn} &
    \texttt{lc.stilldomath@gmail.com} &
    \texttt{binh.nt2@vinuni.edu.vn} \\
    \multicolumn{3}{c}{Center for AI Research, VinUniversity} \\
    \multicolumn{3}{c}{Hanoi, Vietnam}
  \end{tabular}
}

\begin{document}
\maketitle

\begin{abstract}
3D Gaussian Splatting (3DGS) has become a standard representation for real-time novel-view synthesis. Yet, its quality--efficiency trade-off still relies heavily on adaptive density control. Existing cloning, splitting, and pruning rules use proxy signals, such as position gradients, opacity, and image-space error. Although effective in practice, these signals do not directly answer the structural question behind density control: where should new Gaussians be allocated to reduce the multi-view reconstruction loss, and which existing primitives can be pruned with minimal penalty. To address this, we propose first-variation birth–death control for 3DGS, a principled approach that replaces heuristic structural decisions with a variational score derived from the reconstruction loss. At each structural update, we freeze the current Gaussian population, attach auxiliary mass coordinates to each primitive, and differentiate the multiview loss with respect to these coordinates. We demonstrate that this score is a coordinate of the first variation of the Splat Regression Model, giving a rigorous local descent surrogate for birth-death moves: high-score Gaussians are selected as birth sites, while low-score, low-opacity Gaussians are selected for removal. Across standard 3DGS benchmarks, integrating this score into 3DGS-MCMC yields state-of-the-art reconstruction quality and outperforms existing methods, delivering the most significant gains under tight Gaussian budgets. Ultimately, our work provides compelling evidence that gradient-based variational metrics offer a principled alternative to proxy heuristics in 3DGS.
\end{abstract}

\section{Introduction}
\label{sec:introduction}

High-quality photorealistic scene modeling from images is a fundamental task in computer vision and graphics, with applications spanning immersive rendering, robotics, digital humans, and geometric reconstruction~\cite{ChenWang2024Survey3DGS, Zhu2024RoboticsSurvey3DGS}. 3D Gaussian Splatting (3DGS)~\cite{Kerbl2023GaussianSplatting} has emerged as a fast explicit radiance-field representation that models scenes as anisotropic Gaussian primitives and renders them by differentiable rasterization. Its speed and explicitness have led to rapid extensions in dynamic rendering, anti-aliasing, generative 3D content, and SLAM~\cite{Yan2024StreetGaussians, Yu2024MipSplatting, Tang2024DreamGaussian, Qian2024GaussianAvatars, Zhu2024EndoGS, Shao2024SplattingAvatar, Keetha2024SplaTAM, Franke2025VRSplatting, Guedon2024SuGaR}.

Despite this progress, 3DGS training still relies on heuristic \emph{adaptive density control} (ADC). Training a 3DGS model interleaves two distinct optimization mechanisms. Continuous optimization adjusts the parameters (position, covariance, color) of existing Gaussians to fit the image data. Structural control, meanwhile, changes the population itself by creating or destroying primitives. In standard 3DGS, these population updates rely on indirect proxies: high accumulated position gradients trigger cloning or splitting, while low opacity triggers pruning~\cite{Kerbl2023GaussianSplatting}. These signals correlate with regions needing refinement, but they do not directly measure how a discrete birth or death event affects the photometric objective. As a result, primitive placement is sensitive to hand-tuned thresholds. The model frequently misses high-frequency details, creates floaters or needle-like artifacts, and wastes redundant Gaussians to regions that are already well covered~\cite{Bulo2024RevisingDensification, Zhang2024PixelGS, Grubert2025ImprovingADC}. The cost is most visible under tight primitive budgets, where redundant placement directly competes with missing detail~\cite{Baranowski2025ConeGS, Kotovenko2025EDGS, Kheradmand2024MCMC}.

While recent methods improve these signals through pixel-aware gradients, error-guided insertion, and budgeted densification~\cite{Zhang2024PixelGS, Baranowski2025ConeGS, Mallick2024Taming3DGS}, they remain mathematically decoupled from the empirical risk. The Splat Regression Model (SRM) framework~\cite{DanielsRigollet2026SRM} provides a rigorous lens to understand this limitation: it models splat fitting as optimization over mixing measures structured by Wasserstein-Fisher-Rao (WFR) geometry. Under this view, standard 3DGS successfully performs continuous \emph{transport} (moving Gaussian parameters via gradient descent), but completely ignores the WFR \emph{mass} component when deciding where to spawn or destroy primitives. Because disconnected heuristics govern these mass perturbations, a fundamental allocation problem remains unresolved: \emph{How can we identify which discrete primitives to kill, and where to spawn new ones guided by a first-order descent surrogate for the multiview reconstruction loss?}

\paragraph{Contributions.} In this work, we propose reframing 3DGS structural updates from a collection of proxy heuristics into a formal component of empirical risk minimization. Motivated by the WFR geometry, we formulate structural control as a finite-particle birth--death process. At each control step, we freeze the continuous scene parameters and attach an auxiliary mass coordinate to each Gaussian. Differentiating the multiview reconstruction risk with respect to these coordinates yields a per-primitive first-variation score. This score is the negative directional derivative of the objective, evaluated at the frozen parameter state. It mathematically predicts the first-order descent achieved by applying an infinitesimal mass perturbation to that specific primitive. We then use score gaps to guide the birth-death process: capacity is reclaimed by killing low-score, low-opacity Gaussians and reborn at high-score sites. This provides a criterion for birth--death decisions. Because finite structural updates lack strict continuous descent guarantees, we execute them via the state transitions of 3DGS-MCMC~\cite{Kheradmand2024MCMC}. This work bridges the theoretical view of splat fitting as mixing-measure optimization with practical 3DGS training. Our contributions are:
\begin{itemize}[leftmargin=2em] 
    \item We replace heuristic proxy signals with a first-variation score for 3DGS structural control, directly tying primitive allocation to the multiview empirical risk. 
    \item We theoretically justify this framework by discretizing the SRM first variation, and by proving that restricting death to low-opacity Gaussians bounds the immediate rendering disturbance.
    \item We demonstrate that variational allocation outperforms proxy heuristics when representation capacity is constrained. Across standard benchmark datasets, our method achieves state-of-the-art reconstruction quality against matched 3DGS-MCMC and recent budget-aware baselines. 
\end{itemize}

\subsection{Related Work}
\label{sec:related_work}

\paragraph{Adaptive density control and densification.}
The original 3DGS framework couples continuous photometric optimization with adaptive density control: accumulated position-gradient statistics trigger cloning or splitting, while low opacity triggers pruning~\cite{Kerbl2023GaussianSplatting}. A massive body of recent work improves this controller. \emph{Revising Densification} and Pixel-GS replace or reweight the growth criterion using pixel-level error or pixel-aware gradients~\cite{Bulo2024RevisingDensification,Zhang2024PixelGS}, while AbsGS and Improved ADC refine gradient aggregation, threshold schedules, and pruning rules~\cite{Ye2024AbsGS,Grubert2025ImprovingADC}. Other methods manage the primitive budget explicitly through resource-aware schedules, progressive propagation, or high-error insertion rules~\cite{Mallick2024Taming3DGS,Cheng2024GaussianPro,Baranowski2025ConeGS}. While these methods engineer better proxy signals or insertion heuristics, our work diverges fundamentally: we abandon proxies in favor of the structural derivative. By explicitly differentiating the multiview risk with respect to Gaussian mass, we formalize capacity allocation as a direct objective rather than a heuristic byproduct.

\paragraph{Relocation, MCMC, and compact representations.}
Several methods improve 3DGS efficiency by pruning redundant Gaussians or compressing attributes post-training~\cite{Fan2024LightGaussian,Hanson2025PUP3DGS,Niedermayr2024Compressed3DGS,Lee2024Compact3DGaussianRF}, while others avoid incremental growth entirely by relying on dense initializations~\cite{Kotovenko2025EDGS}. The closest backend to our method is 3DGS-MCMC~\cite{Kheradmand2024MCMC}, which rewrites densification and pruning as relocation/addition transitions. Our approach adopts these discrete transitions from 3DGS-MCMC, but completely replaces its structural selection statistic. Rather than relying on opacity-sampling to blindly select relocation targets, we use our first-variation score to predict where released capacity is most likely to reduce the reconstruction risk under a local first-order model.

\paragraph{Variational splat models and first variation.} 
The Splat Regression Model (SRM) formalizes splat models as optimization over mixing measures, deriving Wasserstein--Fisher--Rao gradient flows and explicit first-variation characterizations of mass rescaling~\cite{DanielsRigollet2026SRM}; related birth--death discretizations also appear in sampling theory~\cite{LuSlepcevWang2023BirthDeath}. However, while SRM establishes the continuous, measure-level WFR geometry, it does not prescribe a finite, budget-constrained training algorithm for discrete rasterizers. Our contribution bridges this gap: we derive a concrete, renderer-level mass-coordinate score and a finite score-gap birth--death controller, making this continuous variational geometry actionable for practical 3DGS training.

\section{Background}
\label{sec:background}

We fix notation for the 3DGS renderer and the 3DGS-MCMC relocation backend. The SRM first-variation viewpoint is recalled at the end of the section.

\paragraph{3D Gaussian Splatting.}
At iteration $t$, a scene is represented by a finite population
$\mathcal G_t=\{g_i\}_{i=1}^{M_t}$, where
\[
    g_i=(\mu_i,s_i,R_i,\alpha_i,c_i),
    \qquad
    \mu_i\in\mathbb R^3,\quad
    s_i\in\mathbb R_{>0}^3,\quad
    R_i\in SO(3),\quad
    \alpha_i\in(0,1).
\]

Here $s_i$ and $R_i$ determine the anisotropic covariance $\Sigma_i=R_i\operatorname{Diag}(s_i^2)R_i^\top$, and $c_i$ denotes the view-dependent color parameters.  For a fixed training view $n$ and pixel $x$, let $G_i(x)$ be the projected Gaussian footprint, $a_i(x):=\alpha_iG_i(x)$, and let $j\prec_n i$ denote front-to-back ordering along the ray. Suppressing the view index on projected footprints and transmittances, the rendered color is
\begin{equation}
\hat I_n(x) = \sum_{i=1}^{M_t} T_i(x)\,a_i(x)\,c_i(\mathbf d_n), \qquad T_i(x) := \prod_{j\prec_n i}\bigl(1-a_j(x)\bigr).
\label{eq:bg_alpha_compositing_compact}
\end{equation}

We write $\hat I_n=\Phi(\mathcal G_t,n)$ for the rasterized image.  At
training iteration $t$, a view $n_t$ is sampled and the photometric loss is
$\widehat{\mathcal L}_{\mathrm{img}}^{(t)}
=\ell_{\mathrm{1}}(\Phi(\mathcal G_t, n_t),I_{n_t})$.
The 3DGS-MCMC backend additionally uses opacity and scale regularizers to encourage unused Gaussians to disappear; we retain these terms in training but do not expand them here, since our score pass is defined from the image objective in Section~\ref{sec:method}.

\paragraph{3DGS-MCMC: Markov view and relocation/addition operations.}
3DGS-MCMC~\citep{Kheradmand2024MCMC} interprets the Gaussian population as a sample state of a Markov chain.  For a regularized training objective $\mathcal L$, the formal target density is
\begin{equation*}
    p_\beta(\mathcal G)
    =
    Z_\beta^{-1}\exp\!\bigl(-\beta\mathcal L(\mathcal G)\bigr),
\label{eq:bg_mcmc_target}
\end{equation*}
where $Z_\beta$ is the normalizing constant and $\beta>0$ is an inverse
temperature.  Under this view, the usual gradient update is interpreted as the zero-noise limit of an SGLD step.  For a parameter block $\theta$,
\begin{equation*}
    \theta \leftarrow \theta - \eta\,\widehat{\nabla}_\theta\mathcal L(\mathcal G) + \sigma_\theta(\mathcal G)\,\xi, \qquad \xi\sim\mathcal  N(0,I),
\label{eq:bg_sgld}
\end{equation*}
where $\widehat{\nabla}$ is a stochastic gradient and
$\sigma_\theta(\mathcal G)$ is the exploration scale.  In the original
3DGS-MCMC implementation, this noise is applied to Gaussian centers, with scale depending on the current Gaussian state.

The structural mechanism of 3DGS-MCMC realizes densification and pruning through deterministic state transitions. Let $\mathcal D_t:=\{i:\alpha_i\le\alpha_{\mathrm{death}}\}$ 
and $\mathcal A_t:=\{i:\alpha_i>\alpha_{\mathrm{death}}\}$ denote the candidate dead and live pools, 
respectively. To reallocate capacity, $r_j$ recycled slots from $\mathcal D_t$ (or fresh slots 
during addition) are assigned to a target Gaussian $j$, yielding $N_j:=r_j+1$ co-located copies 
parameterized as:
\begin{equation}
    \mu_{j,\ell}'=\mu_j,
    \qquad
    \alpha_{j,\ell}'
    =
    1-\bigl(1-\alpha_j\bigr)^{1/N_j},
    \qquad
    \Sigma_{j,\ell}'
    =
    \gamma(N_j,\alpha_j)\,\Sigma_j ,
\label{eq:bg_mcmc_relocation}
\end{equation}
for $\ell=1,\dots,N_j$, with color and orientation inherited from the target. The closed-form 
scalar $\gamma(N_j,\alpha_j)$ rescales the covariance such that the aggregate opacity footprint 
of the co-located cluster matches that of the original primitive, minimizing instantaneous rendering 
perturbations.

In our framework, we decouple structural decision-making from physical execution: we adopt these 
closed-form transitions strictly as a \emph{conservative realization backend}, while entirely replacing 
the heuristic opacity proposals with our objective-derived first-variation score.

\paragraph{SRM viewpoint: splat measures and first variation.}
Splat Regression Models (SRM)~\citep{DanielsRigollet2026SRM} formalize ``splat'' representations as optimization over mixing measures.
Let $\rho$ be a centered isotropic ``mother'' density on $\R^3$.
For $(A,b)\in\R^{3\times 3}\times\R^3$, define the transformed splat $\rho_{A,b}:=(A(\cdot)+b)_{\#}\rho$.
A (vector-valued) splat measure $\mu$ over weights and splat parameters induces the mixture model
\begin{equation*}
f_\mu(x)
:=
\int v\,\rho_{A,b}(x)\,\mu(dv,dA,db),
\qquad x\in\R^3.
\label{eq:bg_srm_model}
\end{equation*}
SRM endows the space of such measures with a Wasserstein--Fisher--Rao (WFR) geometry in which gradient flow decomposes into a \emph{transport} component (moving $A,b$) and a \emph{mass} component (rescaling $v$). The first-order object governing the mass component is the \emph{first variation} of the data-fit functional. Inspired by the SRM first-variation view of splat measures, our score pass instantiates a finite-dimensional coordinate of this first-variation field for the specific mass direction that scales opacities (see Section~\ref{subsec:score_pass}).

\begin{figure}[t]
    \centering
    \includegraphics[width=\linewidth]{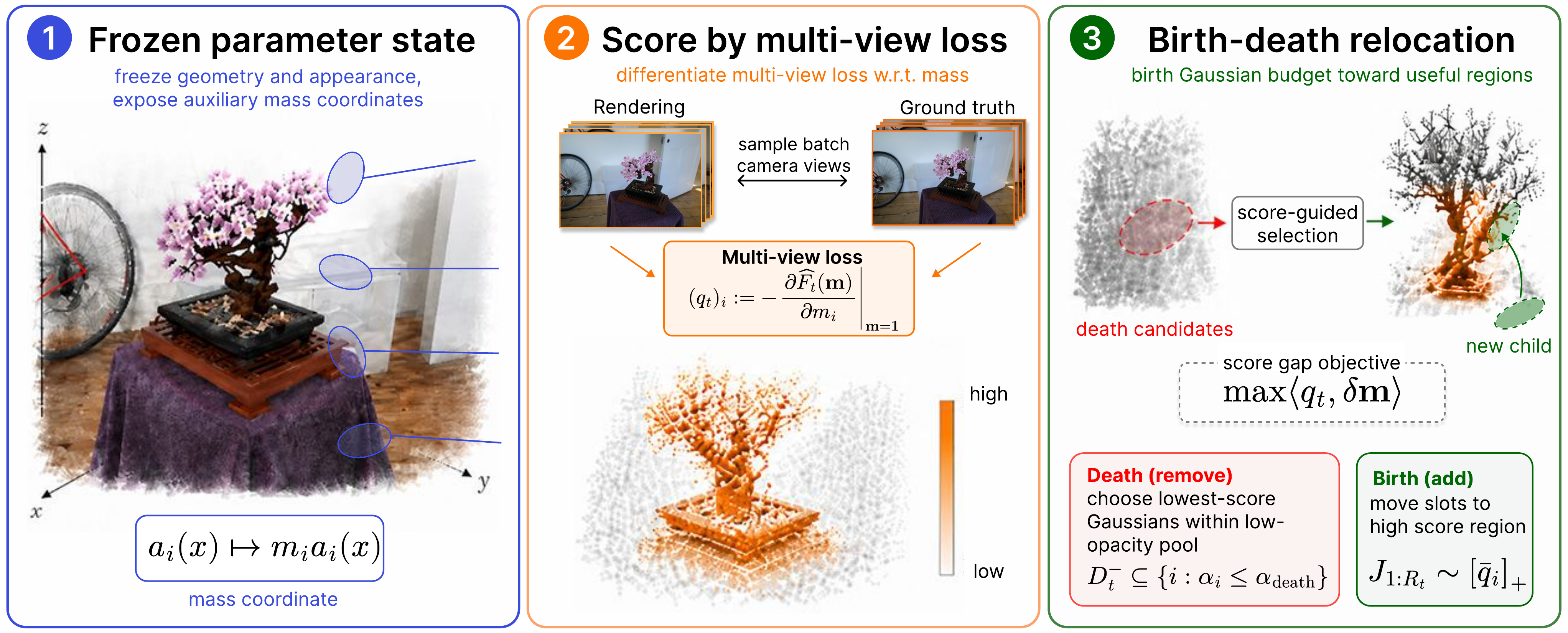}
    \vspace{-1.0em}
    \caption{Overview of the score-guided birth--death controller. At each structural update, we freeze the current Gaussian parameters and expose one auxiliary mass coordinate per primitive, implemented as \(a_i(x)\mapsto m_i a_i(x)\). A batched multiview backward pass computes the first-variation score \(q_i=-\partial \widehat F_t(\mathbf m)/\partial m_i|_{\mathbf m=\mathbf 1}\). Higher scores explicitly predict a larger first-order loss reduction from assigning mass to that primitive. Guided by these scores, the controller kills low-opacity, low-score slots and births new capacity at high-score parent sites using the 3DGS-MCMC transitions.}
    \label{fig:pipeline}
\end{figure}

\section{Method}
\label{sec:method}
\subsection{First-Variation Score from Splat Measures}
\label{subsec:score_pass}

Let $F_{\mathcal D}(\mathcal G) := N^{-1}\sum_{n=1}^N \ell_{\mathrm{img}}(\Phi(\mathcal G, n),I_n)$ denote the full multiview empirical risk. The Splat Regression Model (SRM) shows that optimizing a splat-based representation involves two distinct operations: moving parameters (transport) and changing primitive weights (mass). Standard 3DGS handles transport natively via continuous gradient descent. To handle mass, we evaluate how infinitesimal changes to a Gaussian's weight affect the empirical risk.

At a controller step $t$, we freeze the current population $\mathcal G_t=\{g_i\}_{i=1}^{M_t}$ and attach an auxiliary mass coordinates $\mathbf m_i$ for each Gaussian, initialized to the all-ones vector $\mathbf{1}$. We define a mass-perturbed population $\mathcal G_t^{\mathbf m}$ by scaling the opacity contribution of each Gaussian $i$ by $m_i$, i.e., $a_i(x)\mapsto m_i a_i(x)$. We estimate the minibatch multiview risk and its negative mass derivative from a batch of views $\mathcal B_t$:
\begin{equation}
\widehat F_t(\mathbf m)
:=
\frac{1}{B}
\sum_{n\in\mathcal B_t}
\ell_{\mathrm{img}}\!\left(
\Phi(\mathcal G_t^{\mathbf m}, n), I_n
\right),
\qquad
(q_t)_i
:=
-
\left.
\frac{\partial \widehat F_t(\mathbf m)}
{\partial m_i}
\right|_{\mathbf m=\mathbf 1}.
\label{eq:method_first_variation_score}
\end{equation}
All scene parameters are detached during this computation, so a single backward pass returns the score vector $q_t$. The sign of $(q_t)_i$ gives the local mass interpretation: increasing $m_i$ decreases the empirical risk by $\varepsilon (q_t)_i + o(\varepsilon)$. 

Fix a view $(I_n,\pi_n)$. Under mass coordinates $\mathbf m$, the renderer induces a map $f_{n,t}(\cdot;\mathbf m) := \Phi(\mathcal G_t^{\mathbf m},\pi_n) \in L^2(\Omega;\mathbb R^3)$. Define the per-view functional $F_n(f):=\ell_{\mathrm{img}}(f,I_n)$. Following SRM, the relevant first-order object governing mass creation is the first variation.

\begin{definition}[First variation on images {\cite[Def.~2]{DanielsRigollet2026SRM}}]
\label{def:first_variation}
Let $F:L^2(\Omega;\mathbb R^3)\to\mathbb R$ be Fr\'echet differentiable at
$f$.  Its first variation $\delta F[f]\in L^2(\Omega;\mathbb R^3)$ is the
Riesz representative of the differential: for all
$\xi\in L^2(\Omega;\mathbb R^3)$,
\[
\left.\frac{d}{d\varepsilon}F(f+\varepsilon\xi)\right|_{\varepsilon=0}
=
\int_\Omega \langle \xi(x),\,\delta F[f](x)\rangle\,dx .
\]
\end{definition}

\begin{remark}[On differentiability and discretization]
Definition~\ref{def:first_variation} assumes Fr\'echet differentiability. The practical image loss contains nonsmooth components such as $\ell_1$ and differentiable image-structure terms such as DSSIM.  The implemented renderer is evaluated on a pixel grid, in which case the integrals below are finite sums over pixels.
\end{remark}

Applying the chain rule to $F_n\circ f_{n,t}$ yields a coordinate expression
for the score.

\begin{proposition}[Score as a first-variation coordinate]
\label{prop:qi_first_variation}
Assume each $F_n$ admits a first variation and
$f_{n,t}(\cdot;\mathbf m)$ is differentiable in $\mathbf m$ at
$\mathbf m=\mathbf 1$.  Then the score in
\eqref{eq:method_first_variation_score} satisfies
\[
(q_t)_i
=
-\frac{1}{B}\sum_{n\in\mathcal B_t}
\int_\Omega
\Bigl\langle
\delta F_n\!\bigl[f_{n,t}(\cdot;\mathbf 1)\bigr](x),\;
\left.
\frac{\partial f_{n,t}(x;\mathbf m)}{\partial m_i}
\right|_{\mathbf m=\mathbf 1}
\Bigr\rangle\,dx .
\]
\end{proposition}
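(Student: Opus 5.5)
The plan is to apply the chain rule to the composition $\mathbf m\mapsto F_n(f_{n,t}(\cdot;\mathbf m))$ view by view, and then use linearity of the minibatch average. First, we unpack \eqref{eq:method_first_variation_score}. Since $\widehat F_t(\mathbf m)=\frac1B\sum_{n\in\mathcal B_t}F_n(f_{n,t}(\cdot;\mathbf m))$ is a finite sum, it suffices to show, for each fixed $n$, that
\[
\left.\frac{\partial}{\partial m_i}F_n\bigl(f_{n,t}(\cdot;\mathbf m)\bigr)\right|_{\mathbf m=\mathbf 1}
=\int_\Omega\Bigl\langle \delta F_n[f_{n,t}(\cdot;\mathbf 1)](x),\,\xi_i(x)\Bigr\rangle\,dx,
\qquad \xi_i:=\left.\frac{\partial f_{n,t}(\cdot;\mathbf m)}{\partial m_i}\right|_{\mathbf m=\mathbf 1}.
\]
Summing over $n$, dividing by $B$ and negating then gives the claim.

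For the per-view identity, restrict to the one-parameter curve $\varepsilon\mapsto \mathbf 1+\varepsilon e_i$. Write $f_\varepsilon:=f_{n,t}(\cdot;\mathbf 1+\varepsilon e_i)$ and $f:=f_0$. Differentiability of $f_{n,t}$ in $\mathbf m$, understood as differentiability of an $L^2(\Omega;\mathbb R^3)$-valued map, gives $f_\varepsilon=f+\varepsilon\xi_i+r(\varepsilon)$ with $\|r(\varepsilon)\|_{L^2}=o(\varepsilon)$. Fréchet differentiability of $F_n$ at $f$, with bounded linear differential $DF_n[f]$, gives
\[
F_n(f_\varepsilon)-F_n(f)=DF_n[f](\varepsilon\xi_i+r(\varepsilon))+o\bigl(\|\varepsilon\xi_i+r(\varepsilon)\|_{L^2}\bigr).
\]
Boundedness of $DF_n[f]$ gives $DF_n[f](r(\varepsilon))=o(\varepsilon)$, and $\|\varepsilon\xi_i+r(\varepsilon)\|_{L^2}=O(\varepsilon)$, so the remainder is also $o(\varepsilon)$. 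Dividing by $\varepsilon$ and letting $\varepsilon\to0$ yields $DF_n[f](\xi_i)$. By Definition~\ref{def:first_variation}, this equals $\int_\Omega\langle\xi_i,\delta F_n[f]\rangle\,dx$, the Riesz representation.

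The only delicate point is making sure the hypotheses are read in the right sense. "Admits a first variation" must mean Fréchet differentiability with an $L^2$ Riesz representative, not merely Gateaux differentiability along fixed directions. A Gateaux derivative would not directly handle the perturbation $r(\varepsilon)$, which is not a fixed direction, so the chain-rule step would fail. Likewise, differentiability of $f_{n,t}$ must be taken in the $L^2$ norm. I would state these interpretations explicitly.

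I would also note, following the remark, the pixel-grid case. There $\Omega$ is finite, the integral is a finite sum, and the argument reduces to the finite-dimensional chain rule: $f_{n,t}$ is a polynomial in $\mathbf m$ through $m_ia_i$ and $1-m_ja_j$ in \eqref{eq:bg_alpha_compositing_compact}, so it is differentiable automatically. The only remaining assumption is then differentiability of the loss at the current render.
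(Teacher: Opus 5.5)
Your proposal is correct and follows the paper's proof: you apply the chain rule to $F_n\circ f_{n,t}$ view by view, identify $DF_n[f_{n,t}(\cdot;\mathbf 1)](\xi_i)$ with the integral via the Riesz representative of Definition~\ref{def:first_variation}, and then differentiate the finite minibatch average term by term and flip the sign. The only difference is that you verify the chain-rule step by hand along $\varepsilon\mapsto\mathbf 1+\varepsilon e_i$, absorbing the remainder $r(\varepsilon)$ through boundedness of $DF_n$ and Fr\'echet differentiability, whereas the paper simply invokes the chain rule in Banach spaces.
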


The proof is provided in Appendix~\ref{app:deferred_proofs}. Proposition~\ref{prop:qi_first_variation} connects the SRM first-variation viewpoint to renderer-level mass coordinates. Intuitively, the integral pairs the image-space loss gradient with the image change induced by rescaling the $i$-th Gaussian. Ranking Gaussians by $(q_t)_i$ therefore orders the predicted loss decrease for equal infinitesimal mass increments at the frozen state. Unlike accumulated position gradients or plain opacity~\cite{Kerbl2023GaussianSplatting, Kheradmand2024MCMC}, the score directly evaluates this mass direction, providing an objective-derived criterion for allocating capacity.

In the specific context of 3DGS alpha-compositing \citep{Zwicker2002EWA}, applying the chain rule to the renderer gives a concrete pixel-level expression for this score:
\begin{equation}
(q_t)_i
=
-\frac{1}{B}
\sum_{n\in\mathcal B_t}\sum_x
\left\langle
r_n(x),\;
T_i(x)a_i(x)
\bigl(c_i(\mathbf d_n)-\hat I_i^{\mathrm{back}}(x)\bigr)
\right\rangle ,
\label{eq:score_pixel_interpretation}
\end{equation}
where $\hat I_i^{\mathrm{back}}(x)$ is the accumulated background color and $r_n(x)$ is the image-space loss gradient. Equation~\eqref{eq:score_pixel_interpretation} gives the geometric intuition: a Gaussian receives a high score when its footprint is visible, meaning $T_i(x)a_i(x)$ is large; when its color differs from the accumulated background behind it; and when the resulting color change aligns with the negative loss gradient $-r_n(x)$.

\subsection{Birth--Death Allocation via Score Gaps}
\label{subsec:birth_death_score_gap}

The score \(q_t\) provides a continuous mass derivative, but structural control requires a discrete allocation decision. We model reclaiming capacity from death coordinates \(D\) and assigning it to birth coordinates \(J_{1:R_t}\) through the following balanced mass perturbation:
\begin{equation*}
\delta\mathbf m(J,D)
=
\kappa
\left(
\sum_{r=1}^{R_t}e_{J_r}
-
\sum_{i\in D}e_i
\right),
\qquad |D|=R_t .
\label{eq:delta_m}
\end{equation*}
Here $e_i$ is the $i$-th coordinate vector and $\kappa>0$ sets the scale of the mass perturbation. The following bound identifies its first-order loss term under a smooth local model.

\begin{proposition}[Local descent upper bound under smoothness]
\label{prop:descent_lemma}
If $F(\mathbf m):=\widehat F_t(\mathbf m)$ is $L_{\mathrm{sm}}$-smooth on a neighbourhood containing the segment $\{\mathbf 1+\tau\delta\mathbf m:\tau\in[0,1]\}$, then
\[
F(\mathbf 1+\delta\mathbf m)
\le
F(\mathbf 1)
-
\langle q_t,\delta\mathbf m\rangle
+
\frac{L_{\mathrm{sm}}}{2}\|\delta\mathbf m\|_2^2 .
\]
\end{proposition}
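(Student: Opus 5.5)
This is the standard descent lemma, instantiated at $\mathbf 1$ with direction $\delta\mathbf m$; the only paper-specific input is the sign convention $q_t=-\nabla F(\mathbf 1)$ from \eqref{eq:method_first_variation_score}. Here $L_{\mathrm{sm}}$-smoothness means that $F$ is differentiable on the stated neighbourhood $U$ and that $\nabla F$ is $L_{\mathrm{sm}}$-Lipschitz there in the Euclidean norm, i.e.\ $\|\nabla F(\mathbf u)-\nabla F(\mathbf v)\|_2\le L_{\mathrm{sm}}\|\mathbf u-\mathbf v\|_2$ for all $\mathbf u,\mathbf v\in U$.

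First, reduce to one dimension. Define $\phi(\tau):=F(\mathbf 1+\tau\delta\mathbf m)$ for $\tau\in[0,1]$. The segment lies in $U$, so $\phi$ is differentiable with
\[
\phi'(\tau)=\langle\nabla F(\mathbf 1+\tau\delta\mathbf m),\delta\mathbf m\rangle .
\]
Moreover $\phi'$ is continuous, since $\nabla F$ is Lipschitz on $U$. The fundamental theorem of calculus therefore gives
\[
F(\mathbf 1+\delta\mathbf m)-F(\mathbf 1)=\int_0^1\phi'(\tau)\,d\tau .
\]

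Second, split off the first-order term and bound the remainder. Adding and subtracting $\langle\nabla F(\mathbf 1),\delta\mathbf m\rangle$ inside the integral yields
\[
F(\mathbf 1+\delta\mathbf m)=F(\mathbf 1)+\langle\nabla F(\mathbf 1),\delta\mathbf m\rangle+\int_0^1\bigl\langle\nabla F(\mathbf 1+\tau\delta\mathbf m)-\nabla F(\mathbf 1),\delta\mathbf m\bigr\rangle\,d\tau .
\]
Apply Cauchy--Schwarz and the Lipschitz bound to the integrand. This bounds it by $L_{\mathrm{sm}}\tau\|\delta\mathbf m\|_2^2$, and integrating $\tau$ over $[0,1]$ gives $\frac{L_{\mathrm{sm}}}{2}\|\delta\mathbf m\|_2^2$.

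Finally, substitute $\nabla F(\mathbf 1)=-q_t$. This turns the linear term into $-\langle q_t,\delta\mathbf m\rangle$ and gives exactly the claimed inequality.

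The only delicate point is the regularity hypothesis. The argument needs Lipschitz continuity of the gradient along the segment $\{\mathbf 1+\tau\delta\mathbf m:\tau\in[0,1]\}$, not merely at $\mathbf 1$. The statement assumes this by requiring $U$ to contain the segment, so no convexity of $U$ beyond that is needed. It is also worth remarking, though not needed for the proof, what the bound says for the balanced perturbation: $\langle q_t,\delta\mathbf m\rangle=\kappa\bigl(\sum_r (q_t)_{J_r}-\sum_{i\in D}(q_t)_i\bigr)$ is the score gap, and $\|\delta\mathbf m\|_2^2=2\kappa^2R_t$ when the $J_r$ are distinct and disjoint from $D$.
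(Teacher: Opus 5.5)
Your proposal is correct and follows the paper's own proof: the fundamental theorem of calculus along the segment, adding and subtracting $\nabla F(\mathbf 1)$, Cauchy--Schwarz with the Lipschitz gradient bound to get $L_{\mathrm{sm}}\tau\|\delta\mathbf m\|_2^2$, integration over $\tau$, and finally the substitution $\nabla F(\mathbf 1)=-q_t$. Your explicit note that $\phi'$ is continuous, which justifies the fundamental theorem of calculus, is a small point the paper leaves implicit.
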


Proposition~\ref{prop:descent_lemma} (proof in Appendix~\ref{app:deferred_proofs}) identifies the score gap as the first-order predicted loss decrease:
\begin{equation}
\langle q_t,\delta\mathbf m\rangle
=
\kappa
\left(
\sum_{r=1}^{R_t}(q_t)_{J_r}
-
\sum_{i\in D}(q_t)_i
\right).
\label{eq:score_gap_inner_product}
\end{equation}

\begin{proposition}[Bounded influence of low-opacity Gaussians]
\label{prop:low_opacity_bound}
Fix a view $n$ and let $\hat I_n$ be the rendered image from
\eqref{eq:bg_alpha_compositing_compact}.  Let $\hat I_n^{(-i)}$ be the image
obtained by setting $\alpha_i=0$ while keeping all other Gaussians unchanged.
Assume rendered colors and the terminal background color are bounded in
$[0,1]^3$, and define the per-image mean absolute error
\[
\|J\|_{1,\mathrm{mean}}
:=
\frac{1}{|\Omega_{\mathrm{pix}}|}
\sum_{x\in\Omega_{\mathrm{pix}}}
\frac{1}{3}\|J(x)\|_1 .
\]
Then
\[
\|\hat I_n-\hat I_n^{(-i)}\|_{1,\mathrm{mean}}
\le
\alpha_i .
\]
Consequently, if $\ell_{\mathrm{img}}(\cdot,I_n)$ is $L_\ell$-Lipschitz with
respect to $\|\cdot\|_{1,\mathrm{mean}}$ in a neighbourhood of $\hat I_n$,
then
\[
\bigl|
\ell_{\mathrm{img}}(\hat I_n,I_n)
-
\ell_{\mathrm{img}}(\hat I_n^{(-i)},I_n)
\bigr|
\le
L_\ell\,\alpha_i .
\]
\end{proposition}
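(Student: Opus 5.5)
We prove the bound pixelwise and then average. Fix a pixel $x$ and split the composited color \eqref{eq:bg_alpha_compositing_compact}, with the terminal background $c_{\mathrm{bg}}$ weighted by the total transmittance, into three parts: the Gaussians in front of $i$ ($j\prec_n i$), Gaussian $i$ itself, and everything behind $i$. Write $T_i(x)$ for the transmittance reaching $i$. Let $B_i(x)\in[0,1]^3$ be the normalized color produced by the Gaussians behind $i$ together with the background. Then
\[
\hat I_n(x)=P_i(x)+T_i(x)\Bigl(a_i(x)c_i+\bigl(1-a_i(x)\bigr)B_i(x)\Bigr),
\]
where $P_i(x)$ is the front contribution. $B_i(x)$ is a convex combination of colors in $[0,1]^3$, because the behind weights together with the residual background weight sum to one.

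Setting $\alpha_i=0$ changes neither $P_i$ nor $T_i$. It also leaves $B_i$ unchanged, because the transmittances of Gaussians behind $i$ relative to $i$'s position do not involve $a_i$ except through the factor $(1-a_i)$ already pulled out. Hence $\hat I_n^{(-i)}(x)=P_i(x)+T_i(x)B_i(x)$, and
\[
\hat I_n(x)-\hat I_n^{(-i)}(x)=T_i(x)\,a_i(x)\,\bigl(c_i-B_i(x)\bigr).
\]
This is the same factor $c_i-\hat I_i^{\mathrm{back}}$ that appears in \eqref{eq:score_pixel_interpretation}.

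Each channel of $c_i-B_i(x)$ is a difference of two numbers in $[0,1]$, so it lies in $[-1,1]$. This gives $\tfrac13\|c_i-B_i(x)\|_1\le 1$. We also have $T_i(x)\le 1$ and $a_i(x)=\alpha_iG_i(x)\le\alpha_i$, provided the footprint satisfies $G_i\le 1$. That is the standard unnormalized 3DGS footprint $\exp(-\tfrac12 d^\top\Sigma'^{-1}d)$, and we state it as part of the setup. These give the pixelwise bound $\tfrac13\|\hat I_n(x)-\hat I_n^{(-i)}(x)\|_1\le\alpha_i$. Averaging over $\Omega_{\mathrm{pix}}$ yields the first claim.

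For the loss statement, we apply the Lipschitz hypothesis to the two images $\hat I_n$ and $\hat I_n^{(-i)}$. This needs $\hat I_n^{(-i)}$ to lie in the neighbourhood where Lipschitzness holds. Since the first claim puts it within $\alpha_i$ of $\hat I_n$, we read the hypothesis as covering that ball, or we assume $\alpha_i$ is small enough. It then gives $|\ell_{\mathrm{img}}(\hat I_n,I_n)-\ell_{\mathrm{img}}(\hat I_n^{(-i)},I_n)|\le L_\ell\alpha_i$.

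The main obstacle is the bookkeeping in the middle step: showing that removing $i$ leaves the front part and $B_i$ unchanged, and that $B_i$ lies in $[0,1]^3$. Both require the background term to be included explicitly, with weight $\prod_j(1-a_j)$, so that $B_i$ is a genuine convex combination. It also requires that the ordering of the remaining Gaussians is unaffected by zeroing $\alpha_i$, which holds because depth sorting does not depend on opacity.
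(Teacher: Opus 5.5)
Your proof is correct and follows essentially the same route as the paper. The paper's tail color $U_{i+1}$ (defined by a backward recursion and placed in $[0,1]^3$ by induction) is exactly your normalized behind-color $B_i$, which gives the same identity $\hat I_n(x)-\hat I_n^{(-i)}(x)=T_i a_i\,(c_i-U_{i+1})$ and the same bound via $T_i\le 1$ and $G_i\le 1$. Your remark that $\hat I_n^{(-i)}$ must lie in the Lipschitz neighbourhood is a small point the paper leaves implicit.
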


Proposition~\ref{prop:low_opacity_bound} (proof in Appendix~\ref{app:deferred_proofs}) bounds the disturbance from an individual low-opacity removal. This motivates the candidate pool $\mathcal P_t:=\{i:\alpha_i\le\alpha_{\mathrm{death}}\}$. Within this pool, smaller scores indicate lower retention value under the local mass model.

The final variational controller merges this continuous ideal with discrete constraints. We stabilize multi-view stochasticity using an exponentially moving averaged (EMA) score \(\bar q_t\). The death set \(\mathcal D_t^-\) selects the \(R_t\) Gaussians with the lowest scores \((q_t)_i\) \emph{within} the pool \(\mathcal P_t\). The birth multiset \(J_{1:R_t}\) is sampled from live Gaussians proportional to their positive scores \([(\bar q_t)_i]_+\).

\subsection{Discrete Realization by the 3DGS-MCMC Kernels}
\label{subsec:mcmc_realization}
\begin{wrapfigure}[14]{r}{0.54\textwidth}
\vspace{-1.5em}
\footnotesize
\hrule
\vspace{0.5em}
\refstepcounter{algorithm}
\label{alg:first_variation_birth_death}
\noindent\textbf{Algorithm~\thealgorithm\quad First-variation controller.}
\vspace{0.3em}
\hrule
\vspace{0.35em}

\begin{algorithmic}[1]
\Require $\Delta$, $B$, $\alpha_{\mathrm{death}}$
\For{$t=1,2,\ldots$}
    \State Take a gradient step on continuous parameters of $\mathcal G_t$
    \If{$t \bmod \Delta = 0$}
        \State Freeze $\mathcal G_t$ and sample multi-view batch $\mathcal B_t$.
        \State Compute $q_t$ \Comment{Eq.~\eqref{eq:method_first_variation_score}}
        \State Form $\mathcal P_t = \{i : \alpha_i \le \alpha_{\mathrm{death}}\}$.
        \State Select $\mathcal D_t^-$ as the lowest-score elements in $\mathcal P_t$
        \Comment{Eq.~\eqref{eq:score_gap_inner_product}}
        \State Sample birth sites $J_{1:R_t}$ from live Gaussians using $[(q_t)_i]_+$
        \Comment{Eq.~\eqref{eq:score_gap_inner_product}}
        \State Apply $\mathcal K_{\mathrm{reloc}}$
        \Comment{Eq.~\eqref{eq:mcmc_realization}}
    \EndIf
\EndFor
\end{algorithmic}

\vspace{0.3em}
\hrule
\vspace{0.5\baselineskip}
\end{wrapfigure}

Our variational policy dictates \emph{which} primitives are retired and \emph{where} new capacity is sampled to approximate first-order loss descent. Once the death set $\mathcal D_t^-$ is selected from the lowest-scoring candidates in $\mathcal P_t$ and the birth multiset $J_{1:R_t}$ is sampled proportional to positive scores $[(\bar{q}_t)_i]_+$, the discrete state transition is realized by the footprint-preserving relocation kernel:
\begin{equation}
\mathcal G_{t+}
=
\mathcal K_{\mathrm{reloc}}
\bigl(
\mathcal G_t; \,
\mathcal D_t^-, \,
J_{1:R_t}
\bigr).
\label{eq:mcmc_realization}
\end{equation}
Each recycled slot in $\mathcal D_t^-$ is transferred to a sampled birth site in $J_{1:R_t}$, where the closed-form opacity and covariance corrections~\eqref{eq:bg_mcmc_relocation} preserve the aggregate optical footprint to minimize instantaneous rendering disturbance. When the population is below the prescribed cap, the addition kernel directly activates new slots at sites sampled from $[(\bar{q}_t)_i]_+$ without requiring paired deaths.

\section{Experiments}
\label{sec:experiments}

Our experiments test a single claim: the first variation of the multiview reconstruction risk is a better birth--death signal than opacity or gradient proxies.  We fix the renderer, optimizer, and MCMC transitions to isolate this statistic. This allows us to answer three direct questions: Does the variational controller improve reconstruction quality at the same primitive budgets? Is the gain explicitly caused by the WFR score? And does the allocation behavior align with local descent predictions?

\subsection{Experimental setup}
\label{subsec:experimental_settings}

\paragraph{Datasets and metrics.}
We evaluate on Mip-NeRF360~\cite{barron2022mipnerf360}, Tanks \& Temples~\cite{knapitsch2017tanks}, and DeepBlending~\cite{hedman2018deepblending}. Our main fixed-budget Mip-NeRF360 averages use seven scenes: Bicycle, Garden, Stump, Room, Counter, Kitchen, and Bonsai; Flowers and Treehill are excluded. Tanks \& Temples uses Truck and Train, and DeepBlending uses Dr.\ Johnson and Playroom. We report PSNR, SSIM~\cite{wang2004ssim}, and LPIPS~\cite{zhang2018lpips} on held-out test views. Per-scene results are in Appendix~\ref{app:per_scene_results}; compact-model comparisons use all nine Mip-NeRF360 scenes.

\paragraph{Budgeted evaluation.}
We evaluate under matched Gaussian caps $K\in\{100\mathrm{k},500\mathrm{k},1\mathrm{M}\}$. The fixed-budget setting isolates the structural decision. Every birth consumes finite capacity. Poor allocation cannot be hidden by unbounded growth.

\paragraph{Baselines and protocol.}
We compare against 3DGS~\cite{Kerbl2023GaussianSplatting} with SfM initialization, Improved ADC~\cite{Grubert2025ImprovingADC}, 3DGS-MCMC~\cite{Kheradmand2024MCMC} with SfM initialization, EDGS~\cite{Kotovenko2025EDGS}, and ConeGS~\cite{Baranowski2025ConeGS}. Rows marked $\dagger$ use published fixed-budget benchmark numbers from ConeGS under the same dataset split and Gaussian cap. Claims about the effect of the proposed controller are based on matched 3DGS-MCMC reruns: the renderer, optimizer, initialization, cap, and transition maps remain identical. Only the structural selection statistic changes.

\begin{figure*}[t]
\centering
\includegraphics[width=\textwidth]{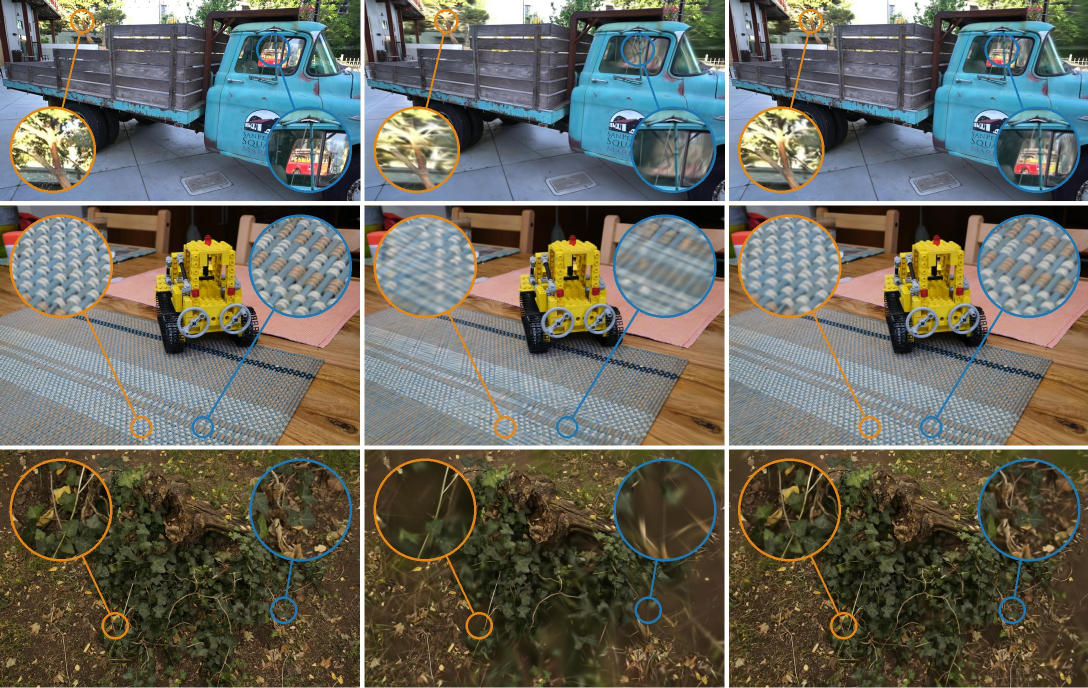}
\par\nobreak\vspace{2pt}
{\small\bfseries
\begin{tabularx}{\linewidth}{@{}*{3}{>{\centering\arraybackslash}X}@{}}
(a) Ground Truth & (b) 3DGS-MCMC & (c) Ours \\
\end{tabularx}\par
}
\caption{Qualitative comparison on Tanks \& Temples (Truck, top row) and Mip-NeRF360 (middle and bottom rows) at a 200k-Gaussian cap with random initialization. All methods use the same view, scale, and crop coordinates in each row. Additional examples are in Appendix~\ref{app:qualitative}.}
\label{fig:main_qualitative}
\end{figure*}

\paragraph{Implementation.}
Our implementation builds on 3DGS-MCMC.  We keep the renderer, training loss, optimizer, relocation/addition kernels, and event schedule unchanged. All methods are trained for $30{,}000$ iterations. Hyperparameters, hardware, and timing details are in Appendix~\ref{app:reproducibility}.

\subsection{Fixed-budget reconstruction quality}
\label{subsec:main_quantitative}

Table~\ref{tab:sota_comparison} establishes fixed-budget reconstruction quality. At the tightest 100k cap, our controller outperforms 3DGS-MCMC baseline, EDGS, and ConeGS across all three datasets. This severe scarcity highlights our advantage: reallocating via WFR score derivatives yields the largest impact.

At 500k and 1M, our method has the highest PSNR across all three datasets in Table~\ref{tab:sota_comparison}. SSIM and LPIPS are metric-dependent: on Mip-NeRF360 at 500k, ConeGS has slightly higher SSIM (0.870 vs.\ 0.869), and competing methods achieve lower LPIPS at both larger caps. These results support the benefit of score-guided allocation under tight budgets without implying uniform superiority. Figure~\ref{fig:main_psnr_budget_frontier} shows the matched PSNR--budget frontier. Full metrics, random-initialization results, and runtimes are in Appendix~\ref{app:budget_runtime}.

\begin{table}[t]
\caption{
Fixed-budget comparison under matched Gaussian caps with SfM initialization for Ours. Mip-NeRF360 averages use the seven scenes listed in Section~\ref{subsec:experimental_settings}. Higher PSNR and SSIM are better; lower LPIPS is better. We highlight the {\setlength\fboxsep{1pt}\colorbox{best}{best}}, {\setlength\fboxsep{1pt}\colorbox{second}{second best}} and {\setlength\fboxsep{1pt}\colorbox{third}{third best}} results within each dataset-budget block.
}
\label{tab:sota_comparison}
\centering
\resizebox{\textwidth}{!}{%
\begin{tabular}{l ccc ccc ccc}
\toprule
\multirow{2}{*}{\textbf{Method}} &
\multicolumn{3}{c}{\textbf{Mip-NeRF360}} &
\multicolumn{3}{c}{\textbf{Tanks \& Temples}} &
\multicolumn{3}{c}{\textbf{DeepBlending}} \\
\cmidrule(lr){2-4} \cmidrule(lr){5-7} \cmidrule(lr){8-10}
& PSNR $\uparrow$ & SSIM $\uparrow$ & LPIPS $\downarrow$
& PSNR $\uparrow$ & SSIM $\uparrow$ & LPIPS $\downarrow$
& PSNR $\uparrow$ & SSIM $\uparrow$ & LPIPS $\downarrow$ \\
\midrule
\multicolumn{10}{c}{\textit{Number of Gaussians limited to 100k}} \\
\midrule
3DGS~\cite{Kerbl2023GaussianSplatting} (SfM)$^\dagger$ & 23.61 & 0.693 & 0.413 & 22.38 & 0.774 & 0.333 & 24.65 & 0.827 & 0.412 \\
Improved ADC~\cite{Grubert2025ImprovingADC} & 26.34 & 0.781 & 0.322 & 22.08 & \cellcolor{third}0.784 & \cellcolor{second}0.280 & 28.25 & \cellcolor{second}0.885 & \cellcolor{second}0.315 \\
3DGS-MCMC~\cite{Kheradmand2024MCMC} (SfM)$^\dagger$ & 27.06 & \cellcolor{third}0.800 & 0.303 & \cellcolor{third}22.50 & 0.780 & 0.332 & \cellcolor{third}28.94 & 0.876 & 0.333 \\
EDGS~\cite{Kotovenko2025EDGS}$^\dagger$ & \cellcolor{third}27.09 & 0.798 & \cellcolor{third}0.296 & 22.32 & 0.777 & 0.324 & 28.43 & 0.872 & 0.337 \\
ConeGS~\cite{Baranowski2025ConeGS}$^\dagger$ & \cellcolor{second}27.74 & \cellcolor{second}0.809 & \cellcolor{second}0.285 & \cellcolor{second}23.12 & \cellcolor{second}0.791 & \cellcolor{third}0.310 & \cellcolor{second}29.44 & \cellcolor{third}0.880 & \cellcolor{third}0.328 \\
\textbf{Ours} & \textbf{\cellcolor{best}27.94} & \textbf{\cellcolor{best}0.818} & \textbf{\cellcolor{best}0.276} & \textbf{\cellcolor{best}23.60} & \textbf{\cellcolor{best}0.831} & \textbf{\cellcolor{best}0.225} & \textbf{\cellcolor{best}29.45} & \textbf{\cellcolor{best}0.892} & \textbf{\cellcolor{best}0.303} \\
\midrule
\multicolumn{10}{c}{\textit{Number of Gaussians limited to 500k}} \\
\midrule
3DGS~\cite{Kerbl2023GaussianSplatting} (SfM)$^\dagger$ & 28.22 & 0.821 & 0.260 & 23.54 & 0.816 & 0.265 & 29.25 & 0.882 & 0.302 \\
Improved ADC~\cite{Grubert2025ImprovingADC} & 28.46 & 0.843 & 0.232 & 23.66 & \cellcolor{second}0.842 & \cellcolor{second}0.192 & 28.95 & \cellcolor{second}0.896 & \textbf{\cellcolor{best}0.260} \\
3DGS-MCMC~\cite{Kheradmand2024MCMC} (SfM)$^\dagger$ & \cellcolor{third}28.82 & 0.861 & 0.214 & \cellcolor{third}23.68 & 0.825 & 0.259 & \cellcolor{third}29.44 & 0.887 & 0.308 \\
EDGS~\cite{Kotovenko2025EDGS}$^\dagger$ & \cellcolor{third}28.82 & \cellcolor{third}0.865 & \cellcolor{second}0.193 & 23.47 & \cellcolor{third}0.831 & \cellcolor{third}0.229 & 29.33 & 0.889 & 0.286 \\
ConeGS~\cite{Baranowski2025ConeGS}$^\dagger$ & \cellcolor{second}29.08 & \textbf{\cellcolor{best}0.870} & \textbf{\cellcolor{best}0.190} & \cellcolor{second}23.69 & 0.829 & \cellcolor{third}0.229 & \cellcolor{second}29.86 & \cellcolor{third}0.891 & \cellcolor{third}0.285 \\
\textbf{Ours} & \textbf{\cellcolor{best}29.11} & \cellcolor{second}0.869 & \cellcolor{third}0.203 & \textbf{\cellcolor{best}24.05} & \textbf{\cellcolor{best}0.857} & \textbf{\cellcolor{best}0.178} & \textbf{\cellcolor{best}30.04} & \textbf{\cellcolor{best}0.906} & \cellcolor{second}0.262 \\
\midrule
\multicolumn{10}{c}{\textit{Number of Gaussians limited to 1M}} \\
\midrule
3DGS~\cite{Kerbl2023GaussianSplatting} (SfM)$^\dagger$ & 28.71 & 0.846 & 0.222 & 23.60 & 0.826 & 0.245 & 28.93 & 0.881 & 0.292 \\
Improved ADC~\cite{Grubert2025ImprovingADC} & 28.94 & 0.858 & 0.205 & \cellcolor{second}24.06 & \cellcolor{second}0.855 & \cellcolor{second}0.168 & 29.18 & \cellcolor{second}0.902 & \textbf{\cellcolor{best}0.248} \\
3DGS-MCMC~\cite{Kheradmand2024MCMC} (SfM)$^\dagger$ & \cellcolor{third}29.23 & 0.875 & \cellcolor{third}0.190 & \cellcolor{third}23.93 & 0.836 & 0.233 & \cellcolor{third}29.67 & 0.887 & 0.288 \\
EDGS~\cite{Kotovenko2025EDGS}$^\dagger$ & 29.18 & \cellcolor{third}0.877 & \textbf{\cellcolor{best}0.168} & 23.66 & \cellcolor{third}0.842 & \cellcolor{third}0.200 & 29.43 & \cellcolor{third}0.889 & \cellcolor{second}0.271 \\
ConeGS~\cite{Baranowski2025ConeGS}$^\dagger$ & \cellcolor{second}29.37 & \cellcolor{second}0.880 & \textbf{\cellcolor{best}0.168} & 23.70 & 0.835 & 0.207 & \cellcolor{second}29.69 & \cellcolor{third}0.889 & \cellcolor{third}0.273 \\
\textbf{Ours} & \textbf{\cellcolor{best}29.49} & \textbf{\cellcolor{best}0.883} & \cellcolor{second}0.178 & \textbf{\cellcolor{best}24.17} & \textbf{\cellcolor{best}0.865} & \textbf{\cellcolor{best}0.157} & \textbf{\cellcolor{best}30.14} & \textbf{\cellcolor{best}0.911} & \textbf{\cellcolor{best}0.248} \\
\bottomrule
\end{tabular}
}
\end{table}

\begin{figure*}[t]
\centering
\includegraphics[width=1.0\textwidth]{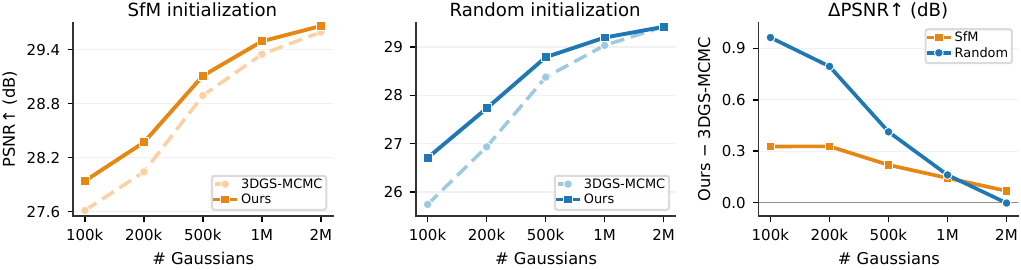}
\vspace{-0.75em}
\caption{
Matched PSNR--budget comparison on the seven-scene Mip-NeRF360 subset. The renderer, optimizer, relocation kernel, and training schedule are fixed; only the birth--death statistic changes from opacity to first variation.
}
\label{fig:main_psnr_budget_frontier}
\end{figure*}

\subsection{Does the first-variation score matter?}
\label{subsec:controller_ablation}

To isolate the effect of the WFR score on these gains, we perform a controlled drop-in ablation on Mip-NeRF360 at a 100k cap. We freeze the renderer, optimizer, training schedule, and MCMC relocation kernel; only the structural selection statistic changes. Table~\ref{tab:controller_ablation} shows that replacing opacity sampling with position gradients yields a +0.04\,dB gain, while the post-activation opacity gradient yields +0.11\,dB. The latter already measures local loss sensitivity, but to an additive opacity change. Our mass score $q_i=-\alpha_i\,\partial\widehat{F}_t/\partial\alpha_i$ instead measures relative mass rescaling and achieves +0.33\,dB, an additional +0.22\,dB over the plain gradient under the matched protocol. Additional score diagnostics, including copy-gain, recycle-hit, and removal-agreement probes, are provided in Appendix~\ref{app:score_validation}.

\begin{table}[!t]
\centering
\begin{minipage}{0.72\linewidth}
\caption{
Controller-statistic ablation on Mip-NeRF360 (100k cap).
Within-sweep gains over the matched opacity-controller rerun.
}
\label{tab:controller_ablation}
\small
\setlength{\tabcolsep}{5pt}
\begin{tabularx}{\linewidth}{@{}l Y r@{}}
\toprule
\textbf{Variant} & \textbf{Selection statistic} & \textbf{$\Delta$PSNR (dB)$\uparrow$} \\
\midrule
(a) MCMC baseline & $\alpha_i$ & -- \\
(b) Position gradient & $\|\nabla_{\mu_i}\widehat{F}_t\|$ & +0.04 \\
(c) Opacity gradient (post-act.) & $-\partial\widehat{F}_t/\partial\alpha_i$ & +0.11 \\
\midrule
(d) Birth only & $(q_i)_+$ & +0.28 \\
(e) Death only & $q_i$ & +0.05 \\
\midrule
\textbf{(f) Ours (Full)} & $q_i$ & \textbf{+0.33} \\
\bottomrule
\end{tabularx}
\end{minipage}
\end{table}

\subsection{Qualitative behavior}
\label{subsec:qualitative}

Figure~\ref{fig:main_qualitative} gives representative crops, while Figure~\ref{fig:main_qualitative_error_maps} adds absolute error maps across scenes and budgets. The matched comparisons show improved reconstruction of fine textures and boundaries with score-guided allocation.

\begin{figure}[!htbp]
\centering
\includegraphics[width=\linewidth]{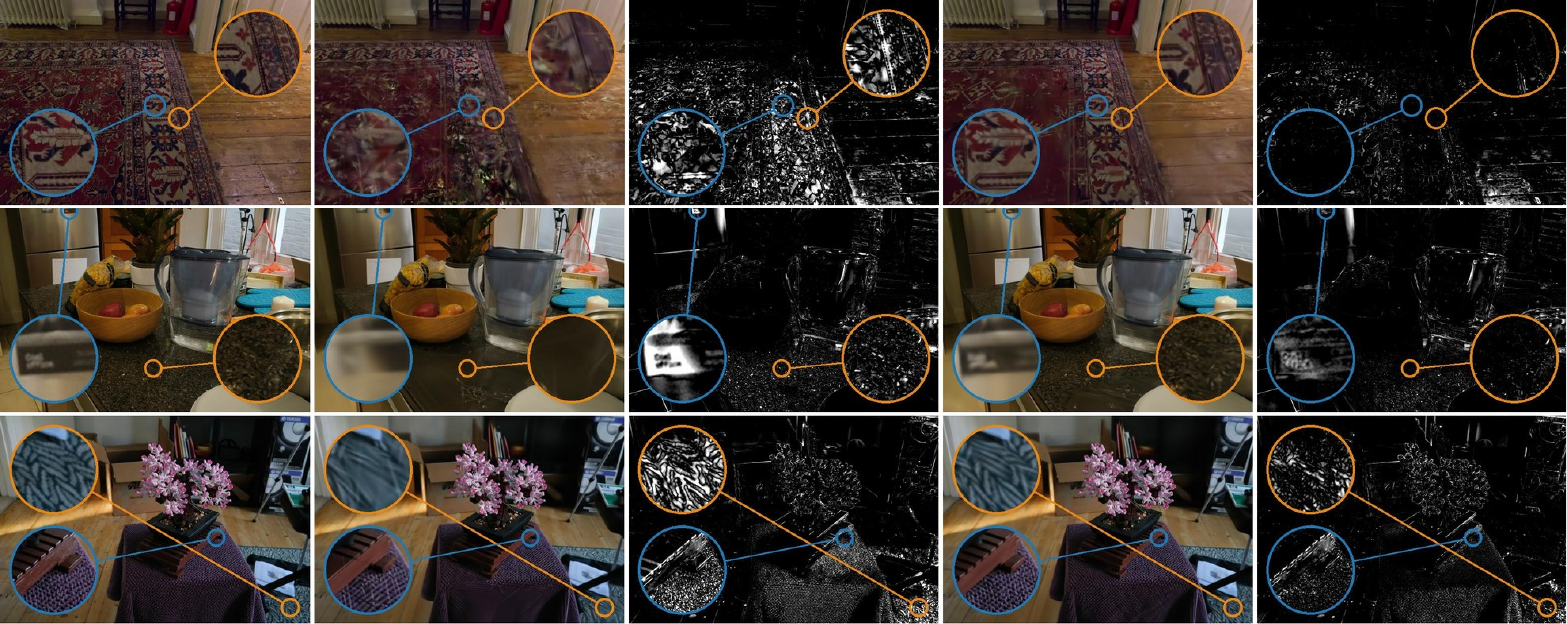}
\par\nobreak\vspace{2pt}
{\small\bfseries
\begin{tabularx}{\linewidth}{@{}*{5}{>{\centering\arraybackslash}X}@{}}
Ground Truth & 3DGS-MCMC & $L_1$ Error & Ours & $L_1$ Error \\
\end{tabularx}\par
}
\caption{Qualitative comparison with absolute $L_1$ error maps. Rows, top to bottom: Dr.\ Johnson (1M), Counter (200k), and Bonsai (100k); all use random initialization. Brighter error-map pixels indicate larger error. Each row uses the same Gaussian cap, test view, and crop across methods.}
\label{fig:main_qualitative_error_maps}
\end{figure}
\FloatBarrier

\providecommand{\rankone}[1]{\cellcolor{best}\textbf{#1}}
\providecommand{\ranktwo}[1]{\cellcolor{second}#1}
\providecommand{\rankthree}[1]{\cellcolor{third}#1}

\subsection{Compaction without Post-Training Pruning}
\label{subsec:compact_regime}
Post-training compaction methods reduce a larger trained model~\cite{Fan2024LightGaussian, Hanson2025PUP3DGS, Xie2024MesonGS}, whereas our method trains under a fixed cap. Table~\ref{tab:compact_regime_main} compares our 500k models with external compact-model results from GHAP~\cite{Wang2025GHAP}. Mip-NeRF360 averages all nine scenes in this comparison, including Flowers and Treehill. Gaussian counts and training protocols differ across rows.

\begin{table*}[!htbp]
\centering

\resizebox{\textwidth}{!}{
\begin{tabular}{l r c c c r c c c r c c c}
\toprule
\multirow{2}{*}{\textbf{Method}} &
\multicolumn{4}{c}{\textbf{Tanks \& Temples}} &
\multicolumn{4}{c}{\textbf{Mip-NeRF360}} &
\multicolumn{4}{c}{\textbf{DeepBlending}} \\
\cmidrule(lr){2-5}
\cmidrule(lr){6-9}
\cmidrule(lr){10-13}
&
\#G & PSNR$\uparrow$ & SSIM$\uparrow$ & LPIPS$\downarrow$ &
\#G & PSNR$\uparrow$ & SSIM$\uparrow$ & LPIPS$\downarrow$ &
\#G & PSNR$\uparrow$ & SSIM$\uparrow$ & LPIPS$\downarrow$ \\
\midrule
Vanilla 3DGS~\citep{Kerbl2023GaussianSplatting}
& 1577 & 23.79 & 0.853 & 0.169
& 2627 & 27.55 & 0.813 & 0.221
& 2475 & 29.82 & 0.907 & 0.238 \\
\midrule
LightGaussian-20\%~\citep{Fan2024LightGaussian}
& 315 & 22.49 & 0.779 & 0.271
& 526 & 26.35 & 0.765 & 0.288
& 495 & 28.01 & 0.873 & 0.315 \\
PUP-3DGS-20\%~\citep{Hanson2025PUP3DGS}
& 315 & \rankthree{22.60} & 0.809 & 0.228
& 525 & \rankthree{26.67} & \ranktwo{0.790} & \ranktwo{0.257}
& 495 & \rankthree{29.72} & \rankthree{0.905} & \rankone{0.248} \\
Trimming the Fat-20\%~\citep{Ali2024TrimmingFat}
& 315 & 22.50 & 0.819 & 0.232
& 524 & 26.49 & 0.781 & 0.280
& 494 & 29.08 & 0.900 & 0.272 \\
MesonGS-20\%~\citep{Xie2024MesonGS}
& 314 & 20.70 & \rankthree{0.822} & \ranktwo{0.207}
& 527 & 25.01 & 0.776 & \rankthree{0.262}
& 496 & 28.70 & 0.897 & \rankthree{0.262} \\
3DGS+GHAP-20\%~\citep{Wang2025GHAP}
& 314 & \ranktwo{23.62} & \ranktwo{0.835} & \rankthree{0.212}
& 527 & \ranktwo{26.97} & \rankthree{0.788} & 0.275
& 496 & \ranktwo{29.86} & \rankone{0.907} & \ranktwo{0.252} \\
3DGS-MCMC~\citep{Kheradmand2024MCMC}
& 315 & 22.14 & 0.779 & 0.282
& 263 & 25.96 & 0.763 & 0.309
& 496 & 28.98 & 0.885 & 0.298 \\
\textbf{Ours}
& \textbf{500} & \rankone{24.05} & \rankone{0.857} & \rankone{0.178}
& \textbf{500} & \rankone{27.61} & \rankone{0.815} & \rankone{0.238}
& \textbf{500} & \rankone{30.04} & \ranktwo{0.906} & \rankthree{0.262} \\
\bottomrule
\end{tabular}
}

\caption{
Compact-model comparison with external results from GHAP~\citep{Wang2025GHAP}. Mip-NeRF360 averages all nine scenes here, including Flowers and Treehill. Ours uses a 500k cap; external counts and training protocols differ. Counts are in thousands. Colors denote {\setlength\fboxsep{1pt}\colorbox{best}{1st}}, {\setlength\fboxsep{1pt}\colorbox{second}{2nd}}, and {\setlength\fboxsep{1pt}\colorbox{third}{3rd}} among compact rows; Vanilla is excluded.
}
\label{tab:compact_regime_main}
\end{table*}

Our controller produces compact models directly during training. The matched-cap comparisons in Table~\ref{tab:sota_comparison} and the matched-controller ablation in Table~\ref{tab:controller_ablation} provide the controlled evidence for its quality gains; Table~\ref{tab:compact_regime_main} provides additional context across training pipelines.

\subsection{Discussion and Limitations}
\label{subsec:discussion_limitations}

\paragraph{Transfer beyond MCMC.}
The first-variation score measures local mass utility at existing
parent sites, while the backend determines how new capacity is
realized. Replacing Vanilla 3DGS's position-gradient criterion
with $q_i$, while retaining native cloning, splitting, and pruning,
provides empirical evidence of transfer beyond MCMC. Protocols and results are provided in Appendix~\ref{app:vanilla_transfer}.

\paragraph{Limitations and future work.} While the first-variation score provides a principled surrogate for mass allocation, several natural frontiers remain. First, as a scalar mass derivative evaluated at existing sites, $q_i$ dictates \emph{where} to allocate capacity rather than parameterizing the optimal geometric displacement of split children; extending this to joint geometry-mass proposal scoring is an exciting direction. Second, because the score tracks photometric residuals, severe cross-view inconsistencies (e.g., auto-exposure in unconstrained captures) can occasionally misguide allocation at high capacities (Appendices~\ref{app:train_failure_case} and~\ref{app:structural_cutoff}). Coupling variational density control with exposure- or uncertainty-aware models presents a promising solution. Finally, adaptive score schedules that decay update frequency as geometry stabilizes could further amortize the auxiliary backward-pass overhead (Appendix~\ref{app:computation_tradeoff}).
\section{Conclusion} 
We reframed 3DGS structural control from a collection of heuristics into a formal mass-allocation problem. By evaluating the first variation of the multiview reconstruction risk, we replaced heuristic cloning and pruning with a  principled birth--death controller. Our experiments expose the fundamental limitation of standard 3DGS: proxy signals succeed when excess capacity hides their inefficiencies. When the primitive budget is constrained, querying the objective's mass derivative is essential to preserve complex geometry and thin structures. This demonstrates that 3DGS structural updates belong in the same rigorous empirical risk minimization framework as continuous parameters. Moving forward, future compact and dynamic radiance fields can bypass heuristic bottlenecks by letting the reconstruction risk govern structural growth.

\begin{ack}
\paragraph{Acknowledgments.}
The authors thank the anonymous reviewers and the Area Chair for their helpful feedback. This work is supported by grant VUNI.2526.CAIR.04.

\paragraph{Competing interests.}
The authors declare no competing interests.
\end{ack}%
\bibliographystyle{abbrvnat}
\bibliography{references}

\clearpage
\appendix
\raggedbottom
\setcounter{topnumber}{4}
\setcounter{bottomnumber}{3}
\setcounter{totalnumber}{6}
\renewcommand{\topfraction}{0.95}
\renewcommand{\bottomfraction}{0.85}
\renewcommand{\textfraction}{0.05}
\renewcommand{\floatpagefraction}{0.75}
\makeatletter
\setlength{\@fptop}{0pt}
\setlength{\@fpsep}{12pt plus 2pt minus 2pt}
\setlength{\@fpbot}{0pt plus 1fil}
\makeatother

\section{Deferred Proofs from Section \ref{sec:method}}
\label{app:deferred_proofs}

\subsection{Proof of Proposition \ref{prop:qi_first_variation}}
\begin{proof}
Fix $i\in\{1,\dots,M_t\}$.  For each $n\in\mathcal B_t$, define
\[
\varphi_n(\mathbf m)
:=
F_n\bigl(f_{n,t}(\cdot;\mathbf m)\bigr)
=
\ell_{\mathrm{img}}\!\bigl(
\Phi(\mathcal G_t^{\mathbf m},\pi_n),I_n
\bigr).
\]
By assumption, $F_n:L^2(\Omega;\mathbb R^3)\to\mathbb R$ is Fr\'echet
differentiable at $f_{n,t}(\cdot;\mathbf 1)$, with differential
$DF_n[f_{n,t}(\cdot;\mathbf 1)]$.  Moreover,
$f_{n,t}(\cdot;\mathbf m)$ is differentiable in $\mathbf m$ at
$\mathbf m=\mathbf 1$, so its $i$-th partial derivative
\[
\xi_{i,n}
:=
\left.
\frac{\partial f_{n,t}(\cdot;\mathbf m)}{\partial m_i}
\right|_{\mathbf m=\mathbf 1}
\in L^2(\Omega;\mathbb R^3)
\]
is well-defined.  The chain rule in Banach spaces gives
\[
\left.
\frac{\partial \varphi_n(\mathbf m)}{\partial m_i}
\right|_{\mathbf m=\mathbf 1}
=
DF_n\!\bigl[f_{n,t}(\cdot;\mathbf 1)\bigr](\xi_{i,n}).
\]
By Definition~\ref{def:first_variation}, the Riesz representative of
$DF_n$ is $\delta F_n[f_{n,t}(\cdot;\mathbf 1)]$, hence
\[
DF_n\!\bigl[f_{n,t}(\cdot;\mathbf 1)\bigr](\xi_{i,n})
=
\int_\Omega
\Bigl\langle
\xi_{i,n}(x),\,
\delta F_n\!\bigl[f_{n,t}(\cdot;\mathbf 1)\bigr](x)
\Bigr\rangle\,dx .
\]
Differentiating the finite sum in~\eqref{eq:method_first_variation_score}
term by term gives
\[
\left.
\frac{\partial \widehat F_t(\mathbf m)}
{\partial m_i}
\right|_{\mathbf m=\mathbf 1}
=
\frac{1}{B}\sum_{n\in\mathcal B_t}
\int_\Omega
\Bigl\langle
\delta F_n\!\bigl[f_{n,t}(\cdot;\mathbf 1)\bigr](x),\;
\left.
\frac{\partial f_{n,t}(x;\mathbf m)}{\partial m_i}
\right|_{\mathbf m=\mathbf 1}
\Bigr\rangle\,dx .
\]
Multiplying by $-1$ yields the claimed expression for $(q_t)_i$.
\end{proof}

\subsection{Proof of Proposition \ref{prop:descent_lemma}}
\begin{proof}
Let $x:=\mathbf 1$ and $y:=\mathbf 1+\delta\mathbf m$.  Since $F$ is
differentiable on a neighbourhood containing the segment
$\{x+\tau(y-x):\tau\in[0,1]\}$, the fundamental theorem of calculus yields
\[
F(y)-F(x)
=
\int_0^1
\Bigl\langle
\nabla F\bigl(x+\tau(y-x)\bigr),\,y-x
\Bigr\rangle\,d\tau .
\]
Add and subtract $\nabla F(x)$ inside the integrand:
\[
F(y)-F(x)
=
\langle \nabla F(x),\,y-x\rangle
+
\int_0^1
\Bigl\langle
\nabla F\bigl(x+\tau(y-x)\bigr)-\nabla F(x),\,y-x
\Bigr\rangle\,d\tau .
\]
By Cauchy--Schwarz and $L_{\mathrm{sm}}$-smoothness,
\[
\Bigl|
\Bigl\langle
\nabla F\bigl(x+\tau(y-x)\bigr)-\nabla F(x),\,y-x
\Bigr\rangle
\Bigr|
\le
L_{\mathrm{sm}}\,\tau\,\|y-x\|_2^2 .
\]
Integrating $\tau$ from $0$ to $1$ gives
\[
F(y)
\le
F(x)+\langle \nabla F(x),\,y-x\rangle
+
\frac{L_{\mathrm{sm}}}{2}\|y-x\|_2^2 .
\]
Finally, $\nabla F(\mathbf 1)=-q_t$ by definition of $q_t$, and
$y-x=\delta\mathbf m$, which yields the stated bound.
\end{proof}

\subsection{Proof of Proposition \ref{prop:low_opacity_bound}}
\begin{proof}
Fix a view $n$ and a pixel $x\in\Omega_{\mathrm{pix}}$.  For readability we
suppress the dependence on $(n,x)$ and write
\[
a_k:=a_k(x)=\alpha_k G_k(x)\in[0,1],
\qquad
T_i:=\prod_{k\prec_n i}(1-a_k)\in[0,1],
\qquad
w_i:=T_i a_i .
\]
Define tail colors recursively by choosing a terminal color
$U_{M_t+1}(x)\in[0,1]^3$ and setting
\[
U_k
:=
a_k\,c_k(\mathbf d_n)
+
(1-a_k)\,U_{k+1},
\qquad
k=M_t,\dots,1 .
\]
Then $\hat I_n(x)=U_1$, and $U_k\in[0,1]^3$ for all $k$ by backward induction.

Let $\hat I_n^{(-i)}$ denote the image obtained by setting $a_i=0$ at every
pixel.  At the fixed pixel $x$, the corresponding tail recursion satisfies
$U_k^{(-i)}=U_k$ for $k\ge i+1$ and
\[
U_i^{(-i)}
=
U_{i+1}.
\]
Because the transmittance before $i$ depends only on Gaussians in front of
$i$, it is unchanged by setting $a_i=0$.  Decomposing the pixel value into the
front contribution and the tail starting at $i$ gives
\[
\hat I_n(x)
=
\sum_{k\prec_n i} w_k\,c_k(\mathbf d_n)
+
T_i U_i,
\qquad
\hat I_n^{(-i)}(x)
=
\sum_{k\prec_n i} w_k\,c_k(\mathbf d_n)
+
T_i U_i^{(-i)} .
\]
Subtracting and using
$U_i-U_i^{(-i)}=U_i-U_{i+1}=a_i(c_i(\mathbf d_n)-U_{i+1})$ yields
\[
\hat I_n(x)-\hat I_n^{(-i)}(x)
=
T_i a_i
\bigl(c_i(\mathbf d_n)-U_{i+1}\bigr)
=
w_i
\bigl(c_i(\mathbf d_n)-U_{i+1}\bigr).
\]
Since $G_i(x)\le 1$ and $T_i\le 1$, we have
$w_i=T_i a_i\le a_i\le\alpha_i$.  Since both
$c_i(\mathbf d_n)$ and $U_{i+1}$ lie in $[0,1]^3$,
\[
\frac13
\bigl\|
\hat I_n(x)-\hat I_n^{(-i)}(x)
\bigr\|_1
\le
w_i
\le
\alpha_i .
\]
Averaging over $x\in\Omega_{\mathrm{pix}}$ gives
$\|\hat I_n-\hat I_n^{(-i)}\|_{1,\mathrm{mean}}\le\alpha_i$.

Finally, if $\ell_{\mathrm{img}}(\cdot,I_n)$ is $L_\ell$-Lipschitz with
respect to $\|\cdot\|_{1,\mathrm{mean}}$, then
\[
\bigl|
\ell_{\mathrm{img}}(\hat I_n,I_n)
-
\ell_{\mathrm{img}}(\hat I_n^{(-i)},I_n)
\bigr|
\le
L_\ell\,
\|\hat I_n-\hat I_n^{(-i)}\|_{1,\mathrm{mean}}
\le
L_\ell\,\alpha_i ,
\]
as claimed.
\end{proof}

\section{Implementation and Reproducibility}
\label{app:full_algorithm}
\label{app:reproducibility}

This section gives the default hyperparameters used in all experiments. The main paper analyzes the variational birth--death decision; the finite relocation, optimizer, regularizers, and event schedule are inherited from the 3DGS-MCMC backend and are kept fixed in the matched comparisons.

\begin{table}[!htbp]
\caption{
Experimental configuration and default hyperparameters.
}
\label{tab:app_config}
\centering
\small
\setlength{\tabcolsep}{6pt}
\begin{tabularx}{\linewidth}{l X}
\toprule
\textbf{Item} & \textbf{Value} \\
\midrule
Training iterations & 30{,}000 \\
Gaussian caps & 100k, 200k, 500k, 1M, 2M \\
Initialization / seed & SfM or random initialization, seed 0 \\
Controller interval $\Delta$ & 40 iterations \\
Score batch size $B$ & 16 views \\
EMA coefficient $\rho$ & 0.9 \\
Clipping quantile $\tau$ & 0.99 absolute quantile \\
Dead-opacity threshold $\alpha_{\mathrm{death}}$ & 0.015 \\
Birth-site weights & $(\bar q_t)_i$ \\
Death policy & Lowest $(\bar q_t)_i$ within the low-opacity pool \\
Jitter scale $\eta$ & 1.0 \\
Photometric loss & $(1-\lambda_{\mathrm{dssim}})\ell_1 + \lambda_{\mathrm{dssim}}(1-\mathrm{SSIM})$ \\
$\lambda_{\mathrm{dssim}}$ & 0.2 \\
Opacity regularizer weight $\lambda_\alpha$ & 0.01 \\
Scale regularizer weight $\lambda_s$ & 0.01 \\
Optimizer & 3DGS Adam parameter groups with unchanged learning rates \\
GPU & NVIDIA RTX 4090 \\
CUDA / PyTorch & CUDA 12.8 / PyTorch 2.11.0 \\
Evaluation metrics & PSNR, SSIM, LPIPS \\
\bottomrule
\end{tabularx}
\end{table}

\section{Matched 3DGS-MCMC Drop-in Comparison}
\label{app:dropin_mcmc}

The main text uses the matched 3DGS-MCMC comparison to isolate the effect of the structural statistic.  Table~\ref{tab:app_mcmc_gains} keeps the renderer, loss, optimizer, relocation kernel, schedule, initialization type, and Gaussian cap fixed; only the controller statistic changes from opacity to the first-variation score.  The gains are largest in the low-budget regime, where the birth--death allocation decision is most constrained.  At larger caps, the effect becomes dataset- and metric-dependent, consistent with the main-paper interpretation that additional capacity can partially mask allocation errors.

\begin{table}[!htbp]
\caption{
Controlled drop-in comparison against our matched 3DGS-MCMC rerun. Positive deltas indicate improvement: $\Delta$PSNR = Ours $-$ MCMC, $\Delta$SSIM = Ours $-$
MCMC, and $\Delta$LPIPS = MCMC $-$ Ours.
}
\label{tab:app_mcmc_gains}
\centering
\small
\setlength{\tabcolsep}{4.5pt}
\begin{adjustbox}{width=\linewidth}
\begin{tabular}{l r ccc ccc ccc}
\toprule
\multirow{2}{*}{\textbf{Dataset}}
& \multirow{2}{*}{\textbf{Cap}}
& \multicolumn{3}{c}{\textbf{3DGS-MCMC rerun}}
& \multicolumn{3}{c}{\textbf{Ours}}
& \multicolumn{3}{c}{\textbf{Gain}} \\
\cmidrule(lr){3-5} \cmidrule(lr){6-8} \cmidrule(lr){9-11}
& & PSNR & SSIM & LPIPS & PSNR & SSIM & LPIPS & $\Delta$PSNR & $\Delta$SSIM & $\Delta$LPIPS \\
\midrule
\multirow{5}{*}{Mip-NeRF360} & 100k & 27.61 & 0.814 & 0.281 & 27.94 & 0.818 & 0.276 & +0.33 & +0.004 & +0.005 \\
 & 200k & 28.04 & 0.835 & 0.254 & 28.37 & 0.841 & 0.249 & +0.33 & +0.005 & +0.006 \\
 & 500k & 28.89 & 0.864 & 0.209 & 29.11 & 0.869 & 0.203 & +0.22 & +0.005 & +0.006 \\
 & 1M & 29.35 & 0.878 & 0.185 & 29.49 & 0.883 & 0.178 & +0.14 & +0.005 & +0.006 \\
 & 2M & 29.60 & 0.885 & 0.168 & 29.66 & 0.890 & 0.163 & +0.07 & +0.004 & +0.005 \\
\addlinespace
\multirow{5}{*}{Tanks \& Temples} & 100k & 23.21 & 0.826 & 0.232 & 23.60 & 0.831 & 0.225 & +0.39 & +0.005 & +0.006 \\
 & 200k & 23.45 & 0.833 & 0.221 & 23.68 & 0.836 & 0.216 & +0.23 & +0.004 & +0.005 \\
 & 500k & 24.06 & 0.854 & 0.184 & 24.05 & 0.857 & 0.178 & -0.01 & +0.003 & +0.006 \\
 & 1M & 24.43 & 0.867 & 0.160 & 24.17 & 0.865 & 0.157 & -0.26 & -0.001 & +0.003 \\
 & 2M & 24.66 & 0.875 & 0.141 & 24.21 & 0.870 & 0.143 & -0.45 & -0.005 & -0.002 \\
\addlinespace
\multirow{5}{*}{DeepBlending} & 100k & 29.14 & 0.891 & 0.305 & 29.45 & 0.892 & 0.303 & +0.32 & +0.001 & +0.002 \\
 & 200k & 29.71 & 0.899 & 0.283 & 29.72 & 0.899 & 0.284 & +0.01 & -0.000 & -0.001 \\
 & 500k & 29.68 & 0.905 & 0.262 & 30.04 & 0.906 & 0.262 & +0.35 & +0.001 & -0.001 \\
 & 1M & 29.94 & 0.909 & 0.249 & 30.14 & 0.911 & 0.248 & +0.20 & +0.002 & +0.000 \\
 & 2M & 30.05 & 0.910 & 0.239 & 30.19 & 0.912 & 0.240 & +0.14 & +0.002 & -0.001 \\
\bottomrule
\end{tabular}
\end{adjustbox}
\end{table}

\section{Transfer beyond 3DGS-MCMC}
\label{app:vanilla_transfer}

\paragraph{Local utility and spatial realization.}
The first-variation score measures the utility of rescaling a Gaussian's
contribution at its current support, independently of the MCMC transition.
The backend realizes the selected capacity through co-located relocation
or Vanilla cloning and spatial splitting~\citep{Kerbl2023GaussianSplatting}.
This separates parent selection from spatial realization: the score does
not determine a split child's displacement.

\paragraph{Vanilla protocol.}
We replace Vanilla's position-gradient criterion with $q_i$, retaining
native splitting geometry, pruning thresholds, and the 30k training
schedule within each comparison. The uncapped experiment selects parents with
$q_i>\tau$ on three Mip-NeRF360 scenes. The threshold $\tau$ is a
tunable hyperparameter, set to $10^{-7}$ for this experiment. Separately, we evaluate
parent selection with Vanilla clone/split geometry at a 500k cap on
five scenes, and native clone/split/prune at a 100k cap on Bicycle.

\begin{table}[!htbp]
\centering
\caption{Uncapped Vanilla 3DGS transfer. Each quality entry reports
PSNR$\uparrow$ / SSIM$\uparrow$ / LPIPS$\downarrow$; final Gaussian
counts are in millions.}
\label{tab:vanilla_uncapped}
\small
\setlength{\tabcolsep}{4pt}
\renewcommand{\arraystretch}{1.08}
\begin{tabular}{@{}l cc rr@{}}
\toprule
& \multicolumn{2}{c}{Reconstruction quality} & \multicolumn{2}{c}{Final \#G (M)} \\
\cmidrule(lr){2-3}\cmidrule(l){4-5}
Scene & Vanilla & Vanilla + $q_i$ & Vanilla & + $q_i$ \\
\midrule
Bicycle & 25.13 / 0.747 / 0.242 & 25.29 / 0.774 / 0.202 & 4.769 & 3.271 \\
Bonsai  & 32.46 / 0.947 / 0.180 & 32.67 / 0.952 / 0.151 & 2.167 & 1.064 \\
Flowers & 21.36 / 0.589 / 0.359 & 21.61 / 0.627 / 0.278 & 3.206 & 2.850 \\
\bottomrule
\end{tabular}
\end{table}

\begin{table}[!htbp]
\centering
\caption{Capped Vanilla transfer. Entries report PSNR$\uparrow$ /
SSIM$\uparrow$ / LPIPS$\downarrow$; the mean uses the five 500k
scenes only. Final Bicycle/100k populations are 99,812 (position
gradient) and 99,910 ($q_i$).}
\label{tab:vanilla_capped}
\small
\setlength{\tabcolsep}{7pt}
\renewcommand{\arraystretch}{1.08}
\begin{tabular}{@{}l cc@{}}
\toprule
Scene & Position gradient & First-variation score $q_i$ \\
\midrule
\multicolumn{3}{c}{500k parent-selection diagnostic} \\
\midrule
Bonsai  & 29.81 / 0.908 / 0.277 & 30.20 / 0.912 / 0.266 \\
Counter & 27.67 / 0.874 / 0.267 & 27.95 / 0.877 / 0.261 \\
Garden  & 24.51 / 0.689 / 0.361 & 24.58 / 0.691 / 0.357 \\
Kitchen & 28.05 / 0.878 / 0.218 & 28.94 / 0.883 / 0.210 \\
Room    & 30.17 / 0.894 / 0.277 & 30.67 / 0.897 / 0.273 \\
\midrule
Mean    & 28.04 / 0.849 / 0.280 & 28.47 / 0.852 / 0.274 \\
\midrule
\multicolumn{3}{c}{100k native clone/split/prune} \\
\midrule
Bicycle & 23.12 / 0.580 / 0.445 & 23.03 / 0.605 / 0.414 \\
\bottomrule
\end{tabular}
\end{table}

\paragraph{Results.}
Table~\ref{tab:vanilla_uncapped} shows improved quality with fewer
final Gaussians; it is not a matched-population comparison.
Table~\ref{tab:vanilla_capped} shows gains across all metrics on the
five 500k scenes. At 100k on Bicycle, SSIM and LPIPS improve while
PSNR decreases slightly. This supports transfer, not uniform superiority.

\paragraph{Scope and future work.}
A scalar mass derivative does not evaluate the full geometric change
of a finite split or the child's eventual reconstruction benefit.
The Bicycle trade-off is consistent with this distinction, but does
not establish displacement as its cause. Future work could score finite proposals jointly with their geometry,
rather than parent mass alone.

\section{High-Capacity Overfitting and Structural-Update Cutoff}
\label{app:structural_cutoff}

To test the effect of late structural updates, we stop birth--death
relocations at iteration 15k while continuing continuous parameter
optimization until 30k. Both variants reach their respective Gaussian
caps before the cutoff, and other settings are held fixed within this
diagnostic. Table~\ref{tab:structural_cutoff} reports the Train results.

\begin{table}[!htbp]
\centering
\caption{Structural-update cutoff on Tanks \& Temples Train.
Continuous optimization runs for 30k iterations in both variants.
These are within-diagnostic comparisons, not replacements for the
default benchmark results.}
\label{tab:structural_cutoff}
\small
\setlength{\tabcolsep}{7pt}
\renewcommand{\arraystretch}{1.08}
\begin{tabular}{@{}ll rrr@{}}
\toprule
Cap & Structural updates & PSNR$\uparrow$ & SSIM$\uparrow$ & LPIPS$\downarrow$ \\
\midrule
500k & Full updates & 21.88 & 0.820 & 0.216 \\
500k & Stop at 15k & 22.36 & 0.823 & 0.217 \\
\midrule
1M & Full updates & 22.06 & 0.835 & 0.190 \\
1M & Stop at 15k & 22.24 & 0.831 & 0.196 \\
\bottomrule
\end{tabular}
\end{table}

Stopping structural updates improves PSNR in these runs, while some
SSIM and LPIPS values worsen. This supports a contribution from late
structural updates, but does not establish a complete remedy or isolate
photometric variation as the sole cause. We retain the default schedule
for the main benchmark and treat the cutoff as a diagnostic variant.
An adaptive schedule based on cross-view score reliability is a direction
for future work.

\section{Extended Comparison to Compact 3DGS}
\label{app:compact_regime}

As introduced in Section~\ref{subsec:compact_regime}, our WFR-derived controller intrinsically prevents the primitive bloat typical of standard 3DGS. Rather than training a massive, computationally expensive model only to prune it post-hoc, our method allocates finite capacity efficiently from scratch. This section provides an extended comparison against recent compaction baselines in the extreme low-primitive regime ($\sim$10\% retention) across all three datasets.

\textbf{Protocol distinction.}
Table~\ref{tab:compact_regime_10pct} contextualizes our results against representative compact 3DGS baselines reported by GHAP~\cite{Wang2025GHAP}. The experimental protocols differ fundamentally: baselines like LightGaussian~\citep{Fan2024LightGaussian}, PUP-3DGS~\citep{Hanson2025PUP3DGS}, and MesonGS~\citep{Xie2024MesonGS} prune and compress a fully trained backbone. In contrast, our method trains directly from scratch under a strict Gaussian cap. We include this comparison to demonstrate a key point: principled, first-variation allocation during training achieves highly competitive compactness without the wasted overhead of an over-parameterized initial phase.

\begin{table*}[!htbp]
\vspace{-0.75em}
\centering
\caption{
Extreme low-primitive comparison ($\sim$10\% retention) on Mip-NeRF360, Tanks~\&~Temples, and DeepBlending. External compact-baseline values are from GHAP~\citep{Wang2025GHAP}. Vanilla 3DGS is shown only as a full-model reference and is not included in the rank highlighting. Rank colors are computed only among the low-primitive rows: {\setlength\fboxsep{1pt}\colorbox{best}{best}}, {\setlength\fboxsep{1pt}\colorbox{second}{second best}} and {\setlength\fboxsep{1pt}\colorbox{third}{third best}}. Higher PSNR/SSIM and lower LPIPS are better. All Gaussian counts are in thousands.
}
\vspace{0.75em}
\label{tab:compact_regime_10pct}
\resizebox{\textwidth}{!}{
\begin{tabular}{l r c c c r c c c r c c c}
\toprule
\multirow{2}{*}{\textbf{Method}} &
\multicolumn{4}{c}{\textbf{Tanks \& Temples}} &
\multicolumn{4}{c}{\textbf{Mip-NeRF360}} &
\multicolumn{4}{c}{\textbf{DeepBlending}} \\
\cmidrule(lr){2-5}
\cmidrule(lr){6-9}
\cmidrule(lr){10-13}
&
\#G & PSNR$\uparrow$ & SSIM$\uparrow$ & LPIPS$\downarrow$ &
\#G & PSNR$\uparrow$ & SSIM$\uparrow$ & LPIPS$\downarrow$ &
\#G & PSNR$\uparrow$ & SSIM$\uparrow$ & LPIPS$\downarrow$ \\
\midrule
Vanilla 3DGS~\citep{Kerbl2023GaussianSplatting}
& 1577 & 23.79 & 0.853 & 0.169
& 2627 & 27.55 & 0.813 & 0.221
& 2475 & 29.82 & 0.907 & 0.238 \\
\midrule
LightGaussian~\citep{Fan2024LightGaussian}
& 158 & 22.11 & 0.756 & 0.306
& 263 & 25.67 & 0.735 & 0.331
& 248 & 28.01 & 0.869 & 0.327 \\
PUP-3DGS~\citep{Hanson2025PUP3DGS}
& 158 & 21.52 & 0.767 & 0.280
& 262 & 25.33 & 0.753 & \rankthree{0.309}
& 248 & \rankthree{29.15} & 0.895 & \ranktwo{0.274} \\
Trimming the Fat~\citep{Ali2024TrimmingFat}
& 156 & 21.54 & 0.776 & 0.293
& 263 & 25.26 & 0.731 & 0.343
& 247 & 28.06 & 0.887 & 0.302 \\
MesonGS~\citep{Xie2024MesonGS}
& 157 & 20.71 & \rankthree{0.811} & \rankone{0.208}
& 263 & 24.92 & \ranktwo{0.773} & \rankone{0.264}
& 248 & 28.69 & \rankthree{0.896} & \rankone{0.264} \\
3DGS+GHAP~\citep{Wang2025GHAP}
& 157 & \ranktwo{23.31} & \ranktwo{0.818} & \rankthree{0.242}
& 263 & \ranktwo{26.40} & \rankthree{0.764} & 0.314
& 248 & \ranktwo{29.65} & \rankone{0.905} & \rankone{0.264} \\
3DGS-MCMC~\citep{Kheradmand2024MCMC}
& 157 & \rankthree{22.14} & 0.779 & 0.282
& 263 & \rankthree{25.96} & 0.763 & \rankthree{0.309}
& 248 & 28.98 & 0.885 & 0.298 \\
\textbf{Ours}
& \textbf{200} & \rankone{23.68} & \rankone{0.836} & \ranktwo{0.216}
& \textbf{200} & \rankone{26.93} & \rankone{0.783} & \ranktwo{0.287}
& \textbf{200} & \rankone{29.72} & \ranktwo{0.899} & \rankthree{0.284} \\
\bottomrule
\end{tabular}
}
\end{table*}

\textbf{Extreme low-primitive comparison.} On the Mip-NeRF360 benchmark, our method uses significantly fewer primitives than the baselines (200k vs. $\sim$263k) yet obtains the highest PSNR and SSIM among the low-primitive models. This trend holds across Tanks~\&~Temples and DeepBlending, where our model delivers the best PSNR and competitive SSIM. While post-training attribute transformation methods (e.g., MesonGS) achieve lower LPIPS in specific settings, our controller establishes a highly competitive distortion-oriented reconstruction frontier straight out of training.

\textbf{Higher-budget comparison.} The equivalent comparison for the 500k-budget regime ($\sim$20\% retention) is provided in Section~\ref{subsec:compact_regime} of the main text. Together, these extreme and higher-budget comparisons reinforce the central thesis of the paper: when the representation is constrained to a compact regime, the WFR mass derivative provides a vastly superior structural statistic for capacity allocation than standard heuristics.
\section{Discussion}
\label{app:discussion}

\subsection{Mass score and opacity-gradient parameterization}
\label{app:mass_opacity_gradient}

The proposed mass-coordinate score is closely related to standard opacity gradients, but it provides a more appropriate coordinate space for structural allocation.

In the frozen score pass, the auxiliary coordinate \(m_i\) modulates the renderer through the opacity footprint: \(a_i(x;\mathbf m)=m_i\alpha_iG_i(x)\). Suppressing the view index and batch average for readability, the chain rule yields a direct equivalence to the post-activation opacity derivative:
\begin{equation*}
q_i
=
\left.
-\frac{\partial \widehat F_t}{\partial m_i}
\right|_{\mathbf m=\mathbf 1}
=
-\sum_x
\frac{\partial \widehat F_t}{\partial a_i(x)}
\alpha_iG_i(x)
=
-\alpha_i
\frac{\partial \widehat F_t}{\partial \alpha_i}.
\end{equation*}

This scaling by \(\alpha_i\) explains the practical stability of the mass-coordinate score. The raw opacity derivative \(\partial \widehat F_t / \partial \alpha_i\) evaluates a scalar perturbation without accounting for the primitive's current presence in the scene. In practice, raw gradients often spike on nearly invisible "floaters" attempting to fit high-frequency noise. By measuring the relative rescaling of the primitive's mass, the factor \(\alpha_i\) acts as an objective-level confidence weight. A nearly transparent primitive with a noisy gradient is correctly suppressed, forcing the controller to prioritize structurally established Gaussians for birth sites. For death decisions, the logic inverts cleanly: we first filter for safe slots via the low-opacity pool, then use the score to cull the slots whose removal will least impact the reconstruction loss.

Crucially, the mass score also bypasses a hidden artifact in standard continuous optimization. 3DGS typically optimizes opacity via a logit \(\zeta_i\), where \(\alpha_i=\sigma(\zeta_i)\). The gradient is therefore:
\begin{equation*}
\frac{\partial \widehat F_t}{\partial \zeta_i}
=
\alpha_i(1-\alpha_i)
\frac{\partial \widehat F_t}{\partial \alpha_i}.
\end{equation*}
Because of the \((1-\alpha_i)\) factor, logit gradients saturate and vanish for highly opaque, solid Gaussians (\(\alpha_i \approx 1\)). If a density controller relied on the optimizer's internal logit gradient to identify structural candidates, it would inherently penalize the most established, reliable primitives in the scene. 

Because our mass score \(q_i\) is derived directly from the linear mass coordinate \(m_i\), it measures the true first-order descent direction, isolated from the saturation artifacts of the activation function. We therefore use \(q_i\) exclusively for discrete birth--death selection, leaving continuous opacity optimization to the inherited 3DGS-MCMC backend.

\subsection{High-budget behavior on Tanks \& Temples Train}
\label{app:train_failure_case}

At larger capacities, our method occasionally trails the opacity-driven 3DGS-MCMC baseline on Tanks \& Temples. Table~\ref{tab:app_train_photometric_diagnostic} isolates the main source of this variance. The high-budget regression is concentrated on \emph{Train}: at 2M Gaussians, \emph{Train} has 13 severe views where our method trails by more than 1 dB, while \emph{Truck} has none. Mip-NeRF360 remains stable overall at the same capacity.

\begin{table}[!htbp]
\centering
\caption{
Diagnostic evidence for the high-budget Tanks \& Temples Train behavior. \(\Delta\)PSNR denotes Ours minus the matched 3DGS-MCMC rerun. The exposure-compensation row reruns both 3DGS-MCMC and Ours under the same exposure-compensation protocol; for Train at 2M, MCMC/Ours PSNR changes from \(22.95/22.10\) to \(24.95/24.65\). This is a paired diagnostic, not a replacement for the default main-table protocol. Severe views denote \(\Delta\)PSNR \(<-1\) dB.
}
\label{tab:app_train_photometric_diagnostic}
\small
\setlength{\tabcolsep}{4.5pt}
\resizebox{\linewidth}{!}{%
\begin{tabular}{l c c c c c}
\toprule
\textbf{Setting} &
\textbf{Protocol} &
\textbf{Mean \(\Delta\)PSNR} &
\textbf{Severe views} &
\textbf{View-luma std.} &
\textbf{Mean bright-pixel frac.} \\
\midrule
Train, 2M & Default & \(-0.85\) & \(13/38\) & \(0.088\) & \(0.094\) \\
Train, 2M & Exp. comp. & \(-0.29\) & \(6/38\) & -- & -- \\
Truck, 2M & Default & \(-0.05\) & \(0/32\) & \(0.061\) & \(0.048\) \\
Mip-NeRF360 avg., 2M & Default & \(+0.06\) & \(3/246\) & \(0.062\) & \(0.031\) \\
\bottomrule
\end{tabular}%
}
\end{table}

This pattern is consistent with the capture conditions. Mip-NeRF360 was captured with fixed exposure and controlled lighting to reduce photometric variation~\citep{barron2022mipnerf360}. Tanks \& Temples is a realistic outside-the-lab benchmark, and \emph{Train} was captured under sunny illumination with automatic ISO to handle varying exposure~\citep{knapitsch2017tanks}. Our image statistics confirm this trend: \emph{Train} exhibits higher view-level brightness variation and a larger bright-pixel fraction than \emph{Truck} or the Mip-NeRF360 average.

This exposes a useful boundary of residual-driven density control. The first-variation score directly follows the empirical reconstruction loss. This is highly desirable when residuals correspond to missing geometry, underrepresented texture, or thin structures. However, at high capacity on \emph{Train}, many easy structural errors have already been resolved. The remaining residuals are heavily weighted toward exposure changes, saturated sky regions, shadows, and view-dependent train surfaces. In that regime, a large image residual is not always reliable birth-site evidence. Robust radiance-field literature makes a related observation: lighting variation, shadows, and other non-persistent effects produce view-dependent floaters when treated as ordinary reconstruction residuals~\citep{Sabour2023RobustNeRF}. Recent image-formation work similarly shows that photometric variation can be absorbed as geometric artifacts when no compensation model is present~\citep{Deutsch2026PPISP}.

\begin{figure}[!htbp]
\centering
\includegraphics[width=\linewidth]{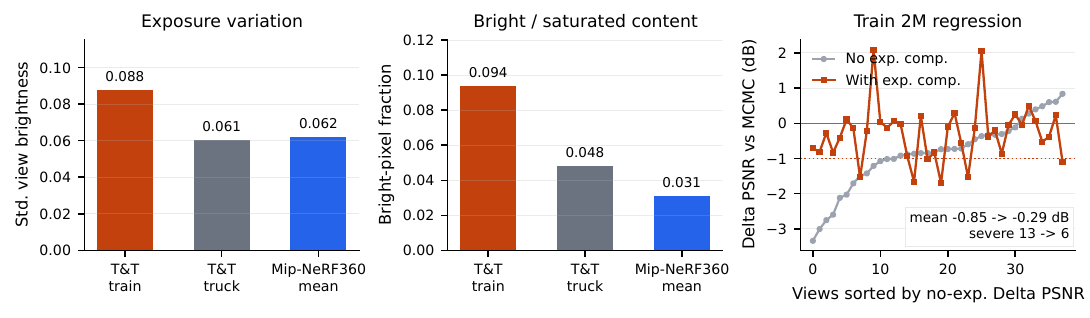}
\caption{
Photometric diagnostic for the high-budget \emph{Train} case. Left and middle: \emph{Train} has higher view-level brightness variation and a larger bright-pixel fraction than \emph{Truck} and the Mip-NeRF360 average. Right: paired exposure-compensation reruns for both 3DGS-MCMC and Ours reduce the mean Train/SfM/2M gap from \(-0.85\) dB to \(-0.29\) dB and reduce severe views from 13 to 6. This supports the interpretation that the high-budget regression is tied to residual reliability under photometric variation, not to a general breakdown of first-variation allocation.
}
\label{fig:app_tnt_train_exposure_diagnostic}
\end{figure}

The exposure-compensation diagnostic in Figure~\ref{fig:app_tnt_train_exposure_diagnostic} narrows the performance gap, supporting the interpretation that photometric inconsistency contributes to the high-budget regression. Once the residual is made photometrically consistent, our method gains \(+2.55\) dB---significantly outpacing MCMC's \(+2.00\) dB recovery. (We retain this as a diagnostic because the official 3DGS exposure setting alters the standard train/test split, breaking comparability with standard benchmarks~\citep{Kerbl2023GaussianSplattingCode}).

\subsection{Compute trade-off and scalability}
\label{app:computation_tradeoff}

Replacing heuristic proxies with a WFR derivative requires an auxiliary backward pass. In our default protocol, this frozen score pass adds a \(1.21\times\) to \(1.27\times\) overhead to the total training time. We view this as a highly favorable trade-off. By paying a fixed computational cost during training, the controller secures higher reconstruction quality under strict primitive caps. Because the final model retains the same Gaussian cap and introduces no additional scoring operation at inference, the method preserves the inference-time architecture of the baseline.

\section{Full Fixed-Budget Comparisons}
\label{app:per_scene_results}

This section expands the fixed-budget comparison from the main text.  The wide table reports all dataset-level averages, including the 2M cap and random initialization rows.  The per-scene tables show where the aggregate gains come from and make clear which scenes and metrics are responsible for non-uniform behavior at larger budgets. 

\begin{table*}[!htbp]
\caption{Full fixed-budget comparison across datasets, capacities, and methods.
Each cell reports PSNR$\uparrow$ / SSIM$\uparrow$ / LPIPS$\downarrow$.
Rows marked $\dagger$ are published ConeGS benchmark values; Improved ADC and
Ours are evaluated at the same Gaussian caps.}
\label{tab:app_wide_sota}
\centering
\resizebox{\textwidth}{!}{
\begin{tabular}{@{}l l cccc@{}}
\toprule
\textbf{Dataset} & \textbf{Method} & \textbf{100k} & \textbf{500k} & \textbf{1M} & \textbf{2M} \\
& & (P / S / L) & (P / S / L) & (P / S / L) & (P / S / L) \\
\midrule
\multirow{7}{*}{\textbf{Mip-NeRF360}} & 3DGS~\cite{Kerbl2023GaussianSplatting} (SfM)$^\dagger$ & 23.61 / 0.693 / 0.413 & 28.22 / 0.821 / 0.260 & 28.71 / 0.846 / 0.222 & 29.01 / 0.861 / 0.200 \\
 & Improved ADC~\cite{Grubert2025ImprovingADC} & 26.34 / 0.781 / 0.322 & 28.46 / 0.843 / 0.232 & 28.94 / 0.858 / 0.205 & 29.21 / 0.871 / 0.184 \\
 & 3DGS-MCMC~\cite{Kheradmand2024MCMC} (SfM)$^\dagger$ & 27.06 / 0.800 / 0.303 & 28.82 / 0.861 / 0.214 & 29.23 / 0.875 / 0.190 & 29.52 / 0.883 / 0.172 \\
 & EDGS~\cite{Kotovenko2025EDGS}$^\dagger$ & 27.09 / 0.798 / 0.296 & 28.82 / 0.865 / 0.193 & 29.18 / 0.877 / 0.168 & 29.38 / 0.883 / 0.155 \\
 & ConeGS~\cite{Baranowski2025ConeGS}$^\dagger$ & 27.74 / 0.809 / 0.285 & 29.08 / 0.870 / 0.190 & 29.37 / 0.880 / 0.168 & 29.43 / 0.881 / 0.158 \\
 & Ours (SfM) & 27.94 / 0.818 / 0.276 & 29.11 / 0.869 / 0.203 & 29.49 / 0.883 / 0.178 & 29.66 / 0.890 / 0.163 \\
 & Ours (Random) & 26.71 / 0.769 / 0.337 & 28.79 / 0.856 / 0.222 & 29.19 / 0.874 / 0.190 & 29.42 / 0.885 / 0.170 \\
\midrule
\multirow{7}{*}{\textbf{Tanks \& Temples}} & 3DGS~\cite{Kerbl2023GaussianSplatting} (SfM)$^\dagger$ & 22.38 / 0.774 / 0.333 & 23.54 / 0.816 / 0.265 & 23.60 / 0.826 / 0.245 & 23.84 / 0.831 / 0.235 \\
 & Improved ADC~\cite{Grubert2025ImprovingADC} & 22.08 / 0.784 / 0.280 & 23.66 / 0.842 / 0.192 & 24.06 / 0.855 / 0.168 & 24.11 / 0.862 / 0.153 \\
 & 3DGS-MCMC~\cite{Kheradmand2024MCMC} (SfM)$^\dagger$ & 22.50 / 0.780 / 0.332 & 23.68 / 0.825 / 0.259 & 23.93 / 0.836 / 0.233 & 24.11 / 0.846 / 0.211 \\
 & EDGS~\cite{Kotovenko2025EDGS}$^\dagger$ & 22.32 / 0.777 / 0.324 & 23.47 / 0.831 / 0.229 & 23.66 / 0.842 / 0.200 & 23.47 / 0.846 / 0.183 \\
 & ConeGS~\cite{Baranowski2025ConeGS}$^\dagger$ & 23.12 / 0.791 / 0.310 & 23.69 / 0.829 / 0.229 & 23.70 / 0.835 / 0.207 & 23.69 / 0.837 / 0.192 \\
 & Ours (SfM) & 23.60 / 0.831 / 0.225 & 24.05 / 0.857 / 0.178 & 24.17 / 0.865 / 0.157 & 24.21 / 0.870 / 0.143 \\
 & Ours (Random) & 22.19 / 0.778 / 0.287 & 23.58 / 0.840 / 0.197 & 23.14 / 0.828 / 0.200 & 23.96 / 0.857 / 0.160 \\
\midrule
\multirow{7}{*}{\textbf{DeepBlending}} & 3DGS~\cite{Kerbl2023GaussianSplatting} (SfM)$^\dagger$ & 24.65 / 0.827 / 0.412 & 29.25 / 0.882 / 0.302 & 28.93 / 0.881 / 0.292 & 29.04 / 0.883 / 0.281 \\
 & Improved ADC~\cite{Grubert2025ImprovingADC} & 28.25 / 0.885 / 0.315 & 28.95 / 0.896 / 0.260 & 29.18 / 0.902 / 0.248 & 29.60 / 0.905 / 0.239 \\
 & 3DGS-MCMC~\cite{Kheradmand2024MCMC} (SfM)$^\dagger$ & 28.94 / 0.876 / 0.333 & 29.44 / 0.887 / 0.308 & 29.67 / 0.887 / 0.288 & 29.53 / 0.886 / 0.279 \\
 & EDGS~\cite{Kotovenko2025EDGS}$^\dagger$ & 28.43 / 0.872 / 0.337 & 29.33 / 0.889 / 0.286 & 29.43 / 0.889 / 0.271 & 29.38 / 0.887 / 0.258 \\
 & ConeGS~\cite{Baranowski2025ConeGS}$^\dagger$ & 29.44 / 0.880 / 0.328 & 29.86 / 0.891 / 0.285 & 29.69 / 0.889 / 0.273 & 29.76 / 0.886 / 0.261 \\
 & Ours (SfM) & 29.45 / 0.892 / 0.303 & 30.04 / 0.906 / 0.262 & 30.14 / 0.911 / 0.248 & 30.19 / 0.912 / 0.240 \\
 & Ours (Random) & 28.32 / 0.875 / 0.325 & 29.74 / 0.899 / 0.274 & 29.96 / 0.909 / 0.254 & 29.92 / 0.909 / 0.246 \\
\bottomrule
\end{tabular}
}
\end{table*}

The 100k per-scene results in Table~\ref{tab:perscene_100k} support the low-budget claim from the main text: when capacity is tight, the first-variation controller improves most scenes and gives especially strong LPIPS gains on Tanks \& Temples and DeepBlending. Tables~\ref{tab:perscene_500k} and~\ref{tab:perscene_1M} show the larger-cap regime.  The method remains strong on PSNR and SSIM, while LPIPS is more metric-dependent on Mip-NeRF360, matching the calibrated interpretation in the main text.

\begin{table*}[!htbp]
\caption{Per-scene fixed-budget comparison at 100k Gaussians.
Each cell reports PSNR / SSIM / LPIPS.
Baseline values for ConeGS, 3DGS (SfM), 3DGS-MCMC (SfM), and EDGS are
published ConeGS per-scene results; Ours is evaluated at the same cap.}
\label{tab:perscene_100k}
\centering
\scriptsize
\resizebox{\textwidth}{!}{%
\begin{tabular}{lccccc}
\toprule
Scene & ConeGS$^\dagger$ & 3DGS (SfM)$^\dagger$ & 3DGS-MCMC (SfM)$^\dagger$ & EDGS$^\dagger$ & \textbf{Ours} \\
\midrule
\multicolumn{6}{c}{\textbf{Mip-NeRF360}} \\
\midrule
Bicycle & 24.05 / 0.652 / 0.379 & 20.99 / 0.470 / 0.543 & 23.34 / 0.630 / 0.400 & 23.68 / 0.635 / 0.393 & \textbf{23.57 / 0.636 / 0.393} \\
Garden & 24.69 / 0.709 / 0.336 & 22.87 / 0.589 / 0.445 & 24.51 / 0.710 / 0.350 & 24.47 / 0.706 / 0.341 & \textbf{25.12 / 0.747 / 0.306} \\
Stump & 25.71 / 0.707 / 0.351 & 22.79 / 0.547 / 0.498 & 25.16 / 0.680 / 0.370 & 25.08 / 0.673 / 0.377 & \textbf{25.41 / 0.695 / 0.363} \\
Room & 31.09 / 0.903 / 0.254 & 27.02 / 0.859 / 0.328 & 30.37 / 0.900 / 0.270 & 30.08 / 0.896 / 0.265 & \textbf{30.94 / 0.903 / 0.261} \\
Counter & 28.22 / 0.879 / 0.251 & 25.64 / 0.845 / 0.307 & 27.83 / 0.880 / 0.260 & 27.77 / 0.877 / 0.251 & \textbf{28.44 / 0.895 / 0.232} \\
Kitchen & 29.73 / 0.896 / 0.183 & 20.53 / 0.671 / 0.440 & 28.37 / 0.890 / 0.210 & 28.84 / 0.891 / 0.186 & \textbf{30.63 / 0.917 / 0.152} \\
Bonsai & 30.67 / 0.920 / 0.241 & 25.41 / 0.866 / 0.331 & 29.84 / 0.910 / 0.260 & 29.73 / 0.906 / 0.256 & \textbf{31.48 / 0.934 / 0.225} \\
\midrule
\multicolumn{6}{c}{\textbf{Tanks \& Temples}} \\
\midrule
Truck & 24.54 / 0.822 / 0.291 & 23.59 / 0.803 / 0.318 & 23.86 / 0.810 / 0.316 & 23.33 / 0.799 / 0.314 & \textbf{25.12 / 0.861 / 0.200} \\
Train & 21.70 / 0.759 / 0.328 & 21.17 / 0.744 / 0.349 & 21.13 / 0.749 / 0.347 & 21.31 / 0.756 / 0.334 & \textbf{22.08 / 0.801 / 0.251} \\
\midrule
\multicolumn{6}{c}{\textbf{DeepBlending}} \\
\midrule
Dr Johnson & 28.78 / 0.875 / 0.338 & 23.58 / 0.808 / 0.440 & 28.16 / 0.869 / 0.343 & 27.66 / 0.865 / 0.347 & \textbf{29.22 / 0.887 / 0.298} \\
Playroom & 30.09 / 0.885 / 0.318 & 25.72 / 0.845 / 0.384 & 29.72 / 0.883 / 0.323 & 29.19 / 0.879 / 0.327 & \textbf{29.68 / 0.897 / 0.308} \\
\bottomrule
\end{tabular}}
\end{table*}

\begin{table*}[!htbp]
\caption{Per-scene fixed-budget comparison at 500k Gaussians.
Each cell reports PSNR / SSIM / LPIPS.
Baseline values for ConeGS, 3DGS (SfM), 3DGS-MCMC (SfM), and EDGS are
published ConeGS per-scene results; Ours is evaluated at the same cap.}
\label{tab:perscene_500k}
\centering
\scriptsize
\resizebox{\textwidth}{!}{%
\begin{tabular}{lccccc}
\toprule
Scene & ConeGS$^\dagger$ & 3DGS (SfM)$^\dagger$ & 3DGS-MCMC (SfM)$^\dagger$ & EDGS$^\dagger$ & \textbf{Ours} \\
\midrule
\multicolumn{6}{c}{\textbf{Mip-NeRF360}} \\
\midrule
Bicycle & 25.25 / 0.758 / 0.242 & 24.06 / 0.651 / 0.370 & 24.90 / 0.740 / 0.282 & 24.95 / 0.751 / 0.251 & \textbf{25.14 / 0.753 / 0.265} \\
Garden & 26.87 / 0.837 / 0.157 & 25.16 / 0.743 / 0.296 & 26.36 / 0.823 / 0.189 & 26.49 / 0.833 / 0.161 & \textbf{26.79 / 0.837 / 0.170} \\
Stump & 27.03 / 0.793 / 0.215 & 25.27 / 0.688 / 0.348 & 26.81 / 0.779 / 0.250 & 26.50 / 0.768 / 0.237 & \textbf{27.21 / 0.795 / 0.229} \\
Room & 32.02 / 0.924 / 0.205 & 31.47 / 0.913 / 0.234 & 31.72 / 0.921 / 0.221 & 31.36 / 0.923 / 0.203 & \textbf{31.84 / 0.922 / 0.219} \\
Counter & 28.87 / 0.906 / 0.194 & 28.86 / 0.901 / 0.213 & 28.95 / 0.907 / 0.205 & 29.05 / 0.912 / 0.184 & \textbf{29.06 / 0.911 / 0.200} \\
Kitchen & 31.38 / 0.929 / 0.124 & 30.76 / 0.918 / 0.143 & 31.04 / 0.921 / 0.141 & 31.35 / 0.928 / 0.122 & \textbf{31.57 / 0.926 / 0.136} \\
Bonsai & 32.16 / 0.944 / 0.191 & 31.95 / 0.936 / 0.218 & 31.97 / 0.939 / 0.210 & 32.02 / 0.942 / 0.191 & \textbf{32.17 / 0.942 / 0.205} \\
\midrule
\multicolumn{6}{c}{\textbf{Tanks \& Temples}} \\
\midrule
Truck & 25.41 / 0.858 / 0.204 & 24.80 / 0.842 / 0.253 & 25.24 / 0.851 / 0.242 & 24.66 / 0.850 / 0.212 & \textbf{26.03 / 0.888 / 0.145} \\
Train & 21.97 / 0.799 / 0.253 & 22.28 / 0.790 / 0.277 & 22.12 / 0.798 / 0.275 & 22.27 / 0.812 / 0.246 & \textbf{22.07 / 0.826 / 0.211} \\
\midrule
\multicolumn{6}{c}{\textbf{DeepBlending}} \\
\midrule
Dr Johnson & 29.20 / 0.890 / 0.291 & 28.85 / 0.881 / 0.307 & 28.71 / 0.884 / 0.313 & 28.60 / 0.888 / 0.292 & \textbf{29.70 / 0.902 / 0.255} \\
Playroom & 30.51 / 0.892 / 0.279 & 29.66 / 0.883 / 0.298 & 30.16 / 0.889 / 0.303 & 30.06 / 0.890 / 0.281 & \textbf{30.38 / 0.911 / 0.270} \\
\bottomrule
\end{tabular}}
\end{table*}

\begin{table*}[!htbp]
\caption{Per-scene fixed-budget comparison at 1M Gaussians.
Each cell reports PSNR / SSIM / LPIPS.
Baseline values for ConeGS, 3DGS (SfM), 3DGS-MCMC (SfM), and EDGS are
published ConeGS per-scene results; Ours is evaluated at the same cap.}
\label{tab:perscene_1M}
\centering
\scriptsize
\resizebox{\textwidth}{!}{%
\begin{tabular}{lccccc}
\toprule
Scene & ConeGS$^\dagger$ & 3DGS (SfM)$^\dagger$ & 3DGS-MCMC (SfM)$^\dagger$ & EDGS$^\dagger$ & \textbf{Ours} \\
\midrule
\multicolumn{6}{c}{\textbf{Mip-NeRF360}} \\
\midrule
Bicycle & 25.45 / 0.778 / 0.198 & 24.40 / 0.688 / 0.325 & 25.34 / 0.768 / 0.238 & 25.26 / 0.778 / 0.199 & \textbf{25.51 / 0.780 / 0.220} \\
Garden & 27.42 / 0.861 / 0.115 & 26.70 / 0.827 / 0.172 & 26.83 / 0.847 / 0.147 & 27.05 / 0.857 / 0.118 & \textbf{27.45 / 0.862 / 0.126} \\
Stump & 27.23 / 0.802 / 0.184 & 25.64 / 0.719 / 0.301 & 27.21 / 0.798 / 0.213 & 26.71 / 0.783 / 0.203 & \textbf{27.56 / 0.814 / 0.193} \\
Room & 32.30 / 0.928 / 0.196 & 31.62 / 0.918 / 0.221 & 32.09 / 0.926 / 0.208 & 31.71 / 0.928 / 0.191 & \textbf{32.10 / 0.927 / 0.205} \\
Counter & 29.21 / 0.911 / 0.181 & 29.11 / 0.907 / 0.201 & 29.24 / 0.913 / 0.191 & 29.27 / 0.917 / 0.171 & \textbf{29.34 / 0.917 / 0.187} \\
Kitchen & 31.66 / 0.932 / 0.117 & 31.18 / 0.924 / 0.131 & 31.44 / 0.927 / 0.130 & 31.83 / 0.933 / 0.114 & \textbf{32.00 / 0.932 / 0.124} \\
Bonsai & 32.33 / 0.945 / 0.183 & 32.34 / 0.941 / 0.205 & 32.46 / 0.944 / 0.199 & 32.44 / 0.947 / 0.181 & \textbf{32.49 / 0.947 / 0.194} \\
\midrule
\multicolumn{6}{c}{\textbf{Tanks \& Temples}} \\
\midrule
Truck & 25.34 / 0.864 / 0.181 & 25.06 / 0.851 / 0.234 & 25.57 / 0.862 / 0.217 & 25.02 / 0.862 / 0.181 & \textbf{26.21 / 0.897 / 0.120} \\
Train & 22.05 / 0.805 / 0.232 & 22.13 / 0.801 / 0.255 & 22.29 / 0.811 / 0.249 & 22.29 / 0.823 / 0.220 & \textbf{22.12 / 0.834 / 0.194} \\
\midrule
\multicolumn{6}{c}{\textbf{DeepBlending}} \\
\midrule
Dr Johnson & 28.98 / 0.886 / 0.282 & 28.85 / 0.884 / 0.293 & 29.06 / 0.884 / 0.291 & 28.75 / 0.890 / 0.277 & \textbf{29.73 / 0.907 / 0.242} \\
Playroom & 30.39 / 0.891 / 0.264 & 29.02 / 0.877 / 0.290 & 30.28 / 0.890 / 0.284 & 30.11 / 0.889 / 0.265 & \textbf{30.55 / 0.915 / 0.255} \\
\bottomrule
\end{tabular}}
\end{table*}

\FloatBarrier
\section{Budget Curves and Runtime}
\label{app:budget_runtime}

This section supports the main-paper claim that the effect of the controller is
largest when the Gaussian budget is tight and reports the training overhead
introduced by the score pass.  Figure~\ref{fig:app_budget_frontier} extends
the main PSNR--budget frontier to PSNR, SSIM, and LPIPS under both SfM and
random initialization.

\begin{figure*}[!htbp]
\centering
\includegraphics[width=1.0\textwidth]{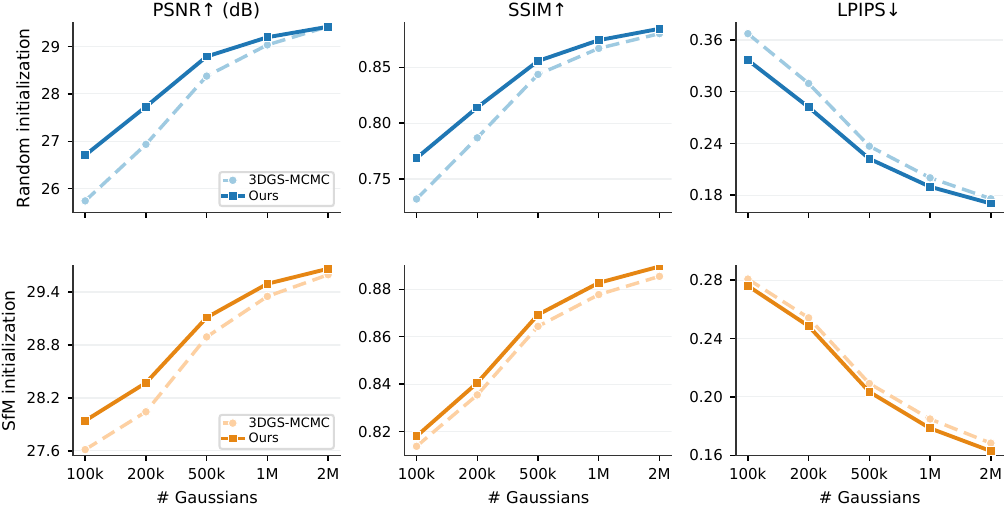}
\caption{
Quality as a function of Gaussian budget on the Mip-NeRF360 comparison
set under random and SfM initialization. Each point is a seed-0 dataset average
at a fixed Gaussian cap; error bands are not shown because matched multi-seed
runs are not available for this sweep. Panels use tight y-axis ranges within
each initialization and metric to make the matched Ours-vs-3DGS-MCMC gap
readable. Across PSNR, SSIM, and LPIPS, the proposed controller improves over
the matched 3DGS-MCMC backend, with the largest gains at lower budgets.
}
\label{fig:app_budget_frontier}
\end{figure*}

Figure~\ref{fig:app_quality_time} and Table~\ref{tab:app_pareto_values} show
the cost of the additional score pass.  Training time increases by roughly
$1.21$--$1.27\times$ in this protocol, while inference cost remains tied to the
same fixed Gaussian budget.

\begin{figure}[!htbp]
\centering
\includegraphics[width=0.75\linewidth]{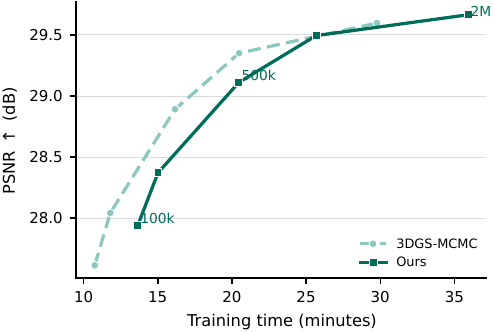}
\caption{
Quality--time tradeoff on the Mip-NeRF360 SfM setting.
Ours improves PSNR at matched Gaussian caps while adding score-pass overhead
during training; inference cost remains tied to the same fixed Gaussian budget.
}
\label{fig:app_quality_time}
\end{figure}

\begin{table}[!htbp]
\caption{Quality--time and computational efficiency values on the Mip-NeRF360 SfM setting. MCMC time is estimated by subtracting measured
score-pass time from the matched Ours wall-clock logs. Peak memory and FPS are
omitted because complete matched measurements were not logged for this timing
protocol.}
\label{tab:efficiency_single_col}
\label{tab:app_pareto_values}
\centering
\resizebox{\linewidth}{!}{
\begin{tabular}{@{}c l c c c c c c c@{}}
\toprule
\textbf{Cap} & \textbf{Method} & \textbf{PSNR$\uparrow$} & \textbf{SSIM$\uparrow$} & \textbf{LPIPS$\downarrow$} & \textbf{Train time} & \textbf{Overhead} & \textbf{Score-pass} & \textbf{Base train} \\
\midrule
\multirow{2}{*}{100k} & 3DGS-MCMC & 27.61 & 0.814 & 0.281 & 10.75m & 1.00$\times$ & \na & \na \\
                      & Ours      & \best{27.94} & \best{0.818} & \best{0.276} & 13.65m & 1.27$\times$ & 2.90m & 10.75m \\
\addlinespace
\multirow{2}{*}{200k} & 3DGS-MCMC & 28.04 & 0.835 & 0.254 & 11.79m & 1.00$\times$ & \na & \na \\
                      & Ours      & \best{28.37} & \best{0.841} & \best{0.249} & 15.01m & 1.27$\times$ & 3.22m & 11.79m \\
\addlinespace
\multirow{2}{*}{500k} & 3DGS-MCMC & 28.89 & 0.864 & 0.209 & 16.16m & 1.00$\times$ & \na & \na \\
                      & Ours      & \best{29.11} & \best{0.869} & \best{0.203} & 20.47m & 1.27$\times$ & 4.31m & 16.16m \\
\addlinespace
\multirow{2}{*}{1M} & 3DGS-MCMC & 29.35 & 0.878 & 0.185 & 20.48m & 1.00$\times$ & \na & \na \\
                      & Ours      & \best{29.49} & \best{0.883} & \best{0.178} & 25.68m & 1.25$\times$ & 5.19m & 20.48m \\
\addlinespace
\multirow{2}{*}{2M} & 3DGS-MCMC & 29.60 & 0.885 & 0.168 & 29.78m & 1.00$\times$ & \na & \na \\
                      & Ours      & \best{29.66} & \best{0.890} & \best{0.163} & 35.92m & 1.21$\times$ & 6.14m & 29.78m \\
\bottomrule
\end{tabular}
}
\end{table}

\FloatBarrier
\section{Score Validation and Controller Sensitivity}
\label{app:score_validation}

The main ablation proves that the WFR statistic improves final reconstruction quality. This section isolates the score itself to verify that it acts as a true local descent surrogate. To test the score's predictive power, we design three empirical probes: \textbf{Copy gain} measures the immediate held-out loss reduction achieved by duplicating a high-score Gaussian, \textbf{Recycle hit rate} tracks how often removing a low-score Gaussian safely leaves the loss unaffected or improved, \textbf{Removal agreement} computes the rank correlation between our score and the actual finite damage caused by deleting a primitive.

Figure~\ref{fig:app_utility_control_validation} shows the results. High-score Gaussians are demonstrably better birth sites. Low-score Gaussians are significantly safer to recycle. Crucially, the WFR score strongly agrees with actual finite removal damage, whereas opacity and position gradients show weak or near-random correlation.

\begin{figure*}[!htbp]
\centering
\includegraphics[width=0.98\textwidth]{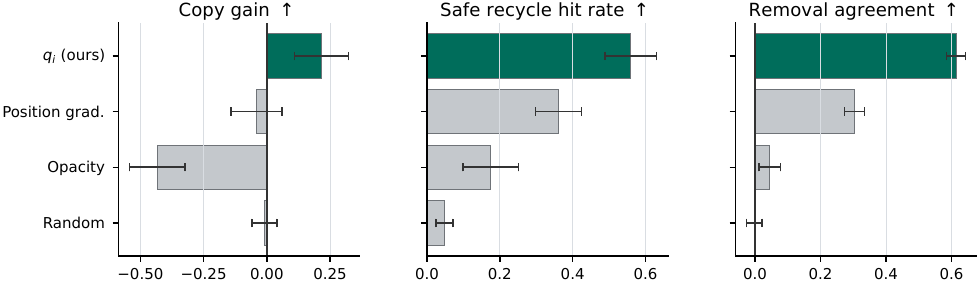}

\vspace{0.6em}

\small
\setlength{\tabcolsep}{6pt}
\begin{tabular}{lccc}
\toprule
\textbf{Method}
& \textbf{Copy gain$\uparrow$}
& \textbf{Recycle hit rate$\uparrow$}
& \textbf{Removal agree.$\uparrow$} \\
\midrule
\textbf{Ours score} ($q_i$)
& $\mathbf{0.216{\pm}0.107}$
& $\mathbf{0.560{\pm}0.070}$
& $\mathbf{0.613{\pm}0.029}$ \\
Position grad.
& $-0.041{\pm}0.101$
& $0.361{\pm}0.063$
& $0.303{\pm}0.031$ \\
Opacity
& $-0.434{\pm}0.110$
& $0.175{\pm}0.076$
& $0.044{\pm}0.033$ \\
Random
& $-0.010{\pm}0.050$
& $0.048{\pm}0.023$
& $-0.003{\pm}0.024$ \\
\bottomrule
\end{tabular}

\caption{
Utility-score validation for the fixed-budget controller.
Top: choosing high-score Gaussians gives larger held-out loss reduction after a
small test copy; choosing low-score Gaussians better identifies slots that are
safe to recycle; and the score agrees with finite removal damage.
Bottom: mean $\pm$ standard error over disjoint-view probes. Copy and recycle
use 18 samples; removal agreement uses 16 finite Spearman correlations.
}
\label{fig:app_utility_control_validation}
\end{figure*}

Table~\ref{tab:app_probe_view_split} separates same-view and disjoint-view validation. Same-view probes are naturally easier because the derivative and probe loss share the same views. However, performance on disjoint and held-out test views remains positive, confirming that the WFR score identifies true geometric utility rather than merely memorizing the local score batch.

\begin{table}[!htbp]
\caption{
Effect of probe-view choice in utility-score validation.
The main paper uses disjoint probe views to avoid measuring the derivative on
the same views used to compute the score. Values report mean $\pm$ standard
error over 18 validation samples.
}
\label{tab:app_probe_view_split}
\centering
\small
\setlength{\tabcolsep}{6pt}
\begin{tabular}{l c c c}
\toprule
\textbf{Probe setting} & \textbf{Spearman$\uparrow$} & \textbf{Top-10\% gain$\uparrow$} & \textbf{Death P@10$\uparrow$} \\
\midrule
Same views as score batch & $0.954 \pm 0.007$ & $1.442 \pm 0.081$ & $0.929 \pm 0.026$ \\
Disjoint probe views      & $0.613 \pm 0.029$ & $0.216 \pm 0.107$ & $0.560 \pm 0.070$ \\
Held-out test views       & $0.632 \pm 0.026$ & $0.600 \pm 0.108$ & $0.615 \pm 0.055$ \\
\bottomrule
\end{tabular}
\end{table}

\FloatBarrier
\section{Additional Qualitative Results}
\label{app:qualitative}

The qualitative results illustrate the visual consequences of the matched controller comparison.  They are not used as the primary evidence for the first-variation claim; the quantitative tables and controller ablations serve that role.  The figures below show that the same fixed-cap controller improves fine structures, boundaries, and repeated textures across datasets and initialization regimes.

\begin{figure*}[!htbp]
\centering
\begin{overpic}[width=1.0\textwidth]{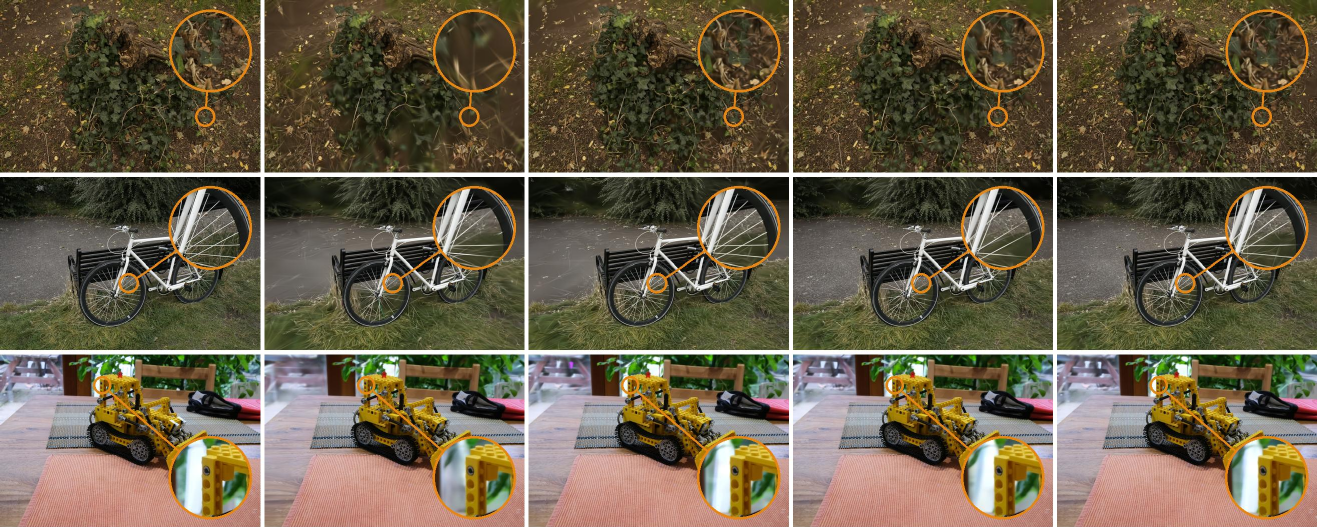}
    \put(8, -2.5){\small \textbf{GT}}
    \put(23, -2.5){\small \textbf{MCMC-Rand.}}
    \put(44, -2.5){\small \textbf{Ours-Rand.}}
    \put(63, -2.5){\small \textbf{MCMC-SfM}}
    \put(85, -2.5){\small \textbf{Ours-SfM}}
\end{overpic}
\par\vspace{0.7em} %
\caption{
Qualitative comparison under random and SfM initialization. For each scene, we
compare the matched 3DGS-MCMC controller and our score-guided controller under
both initialization regimes. This comparison verifies that the visual gains are
not tied to a particular initialization choice. All columns use the same
Gaussian cap, test view, and crop coordinates within each row.
}
\label{fig:app_initialization_comparison}
\end{figure*}

\begin{figure*}[!htbp]
\centering
\begin{overpic}[width=1.0\textwidth]{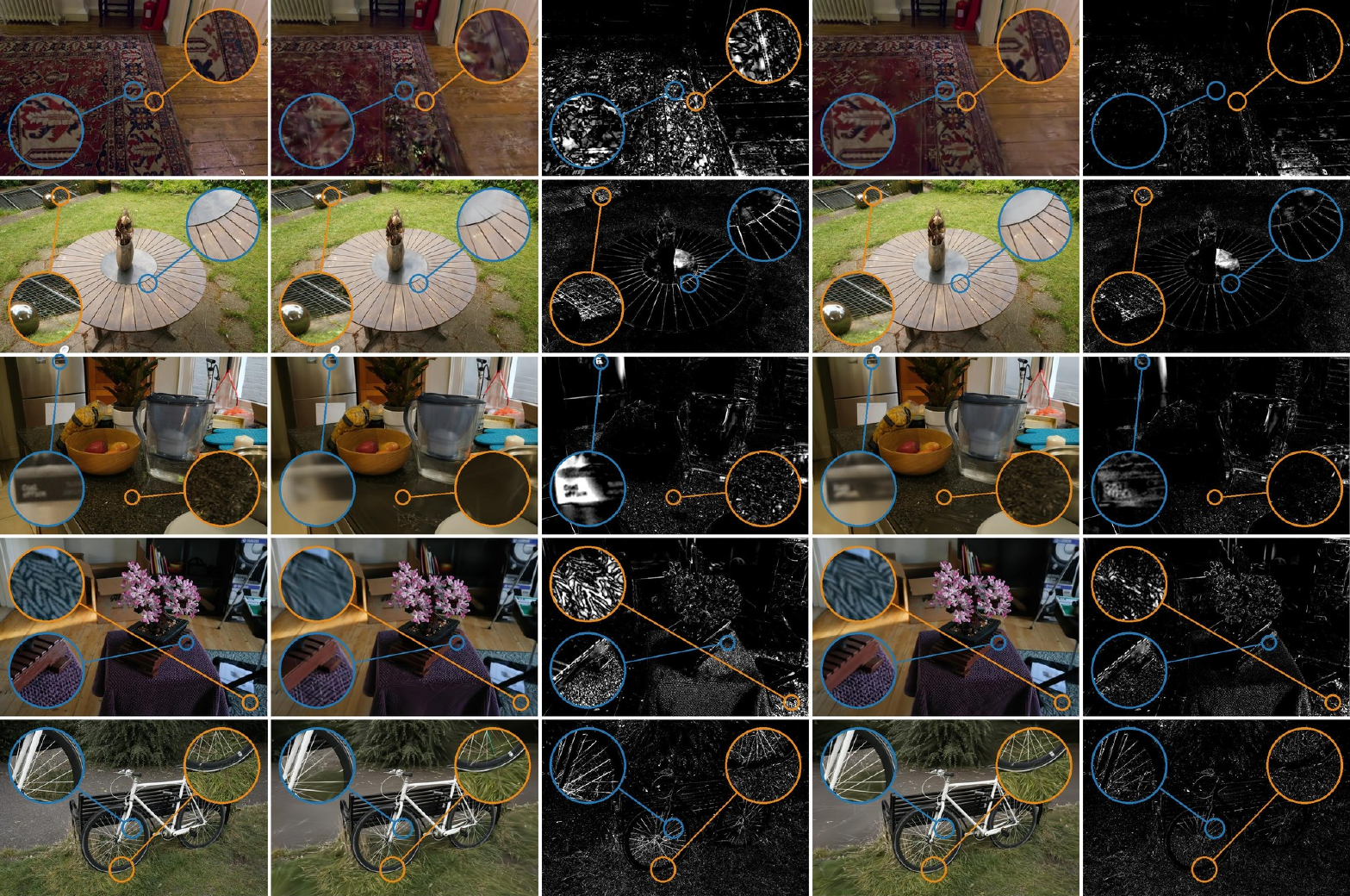}
    \put(3, -2.8){\small \textbf{Ground Truth}}
    \put(23, -2.8){\small \textbf{3DGS-MCMC}}
    \put(45, -2.8){\small \textbf{$L_1$ Error}}
    \put(67, -2.8){\small \textbf{Ours}}
    \put(86, -2.8){\small \textbf{$L_1$ Error}}
\end{overpic}
\par\vspace{0.7em} %
\caption{
Qualitative comparison across scenes and budgets.
Each row shows one test-view crop with a fixed scene, Gaussian cap, and random
initialization. From top to bottom, the rows correspond to
\emph{Dr.\ Johnson} (1M, random), \emph{Garden} (500k, random),
\emph{Counter} (200k, random), \emph{Bonsai} (100k, random), and
\emph{Bicycle} (100k, random). From left to right, we show the ground-truth
image, the matched 3DGS-MCMC rendering, the per-pixel absolute $L_1$ error map
$|I_{\mathrm{GT}} - I_{\mathrm{MCMC}}|$, our rendering, and the per-pixel
absolute $L_1$ error map $|I_{\mathrm{GT}} - I_{\mathrm{ours}}|$. Brighter
pixels in the error maps indicate larger reconstruction error. All methods use
the same test view, scale, and crop coordinates within each row.
}
\label{fig:app_qualitative_scenes_budgets}
\end{figure*}

\begin{figure*}[!htbp]
\centering
\begin{overpic}[width=1.0\textwidth]{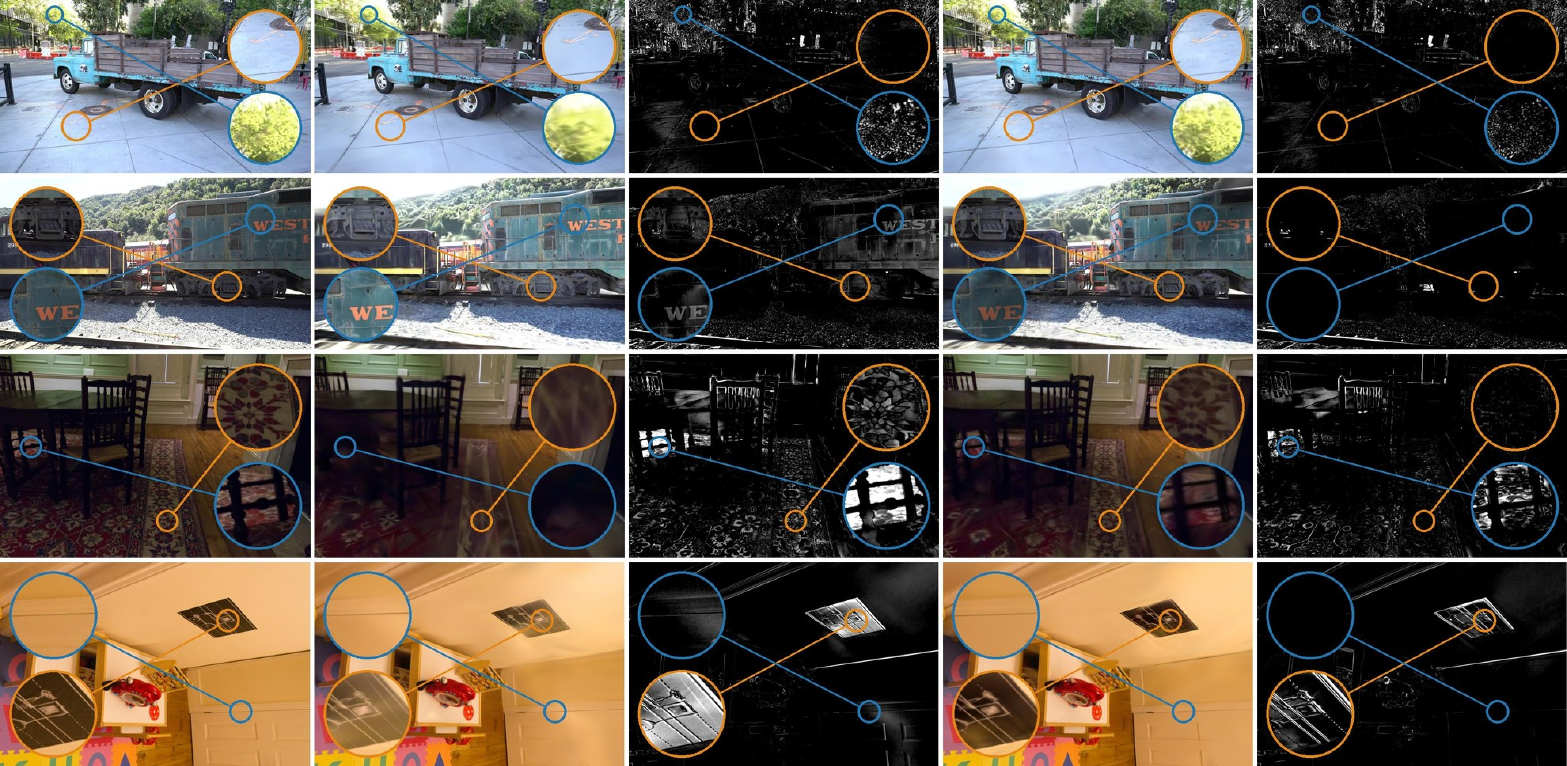}
    \put(3, -2.8){\small \textbf{Ground Truth}}
    \put(22, -2.8){\small \textbf{3DGS-MCMC}}
    \put(45, -2.8){\small \textbf{$L_1$ Error}}
    \put(67, -2.8){\small \textbf{Ours}}
    \put(84, -2.8){\small \textbf{$L_1$ Error}}
\end{overpic}
\par\vspace{0.7em} %
\caption{
All-scene qualitative comparison on Tanks \& Temples and DeepBlending at a
200k-Gaussian cap with random initialization. Each row uses the same test view
and crop coordinates for the ground truth, 3DGS-MCMC, and our method. These
examples show that the score-guided controller also improves visual fidelity
outside Mip-NeRF360, especially on thin structures, object boundaries, and
repeated textures.
}
\label{fig:app_all_scene_tandt_deepblending}
\end{figure*}

\begin{figure*}[!htbp]
\centering
\begin{overpic}[width=1.0\textwidth]{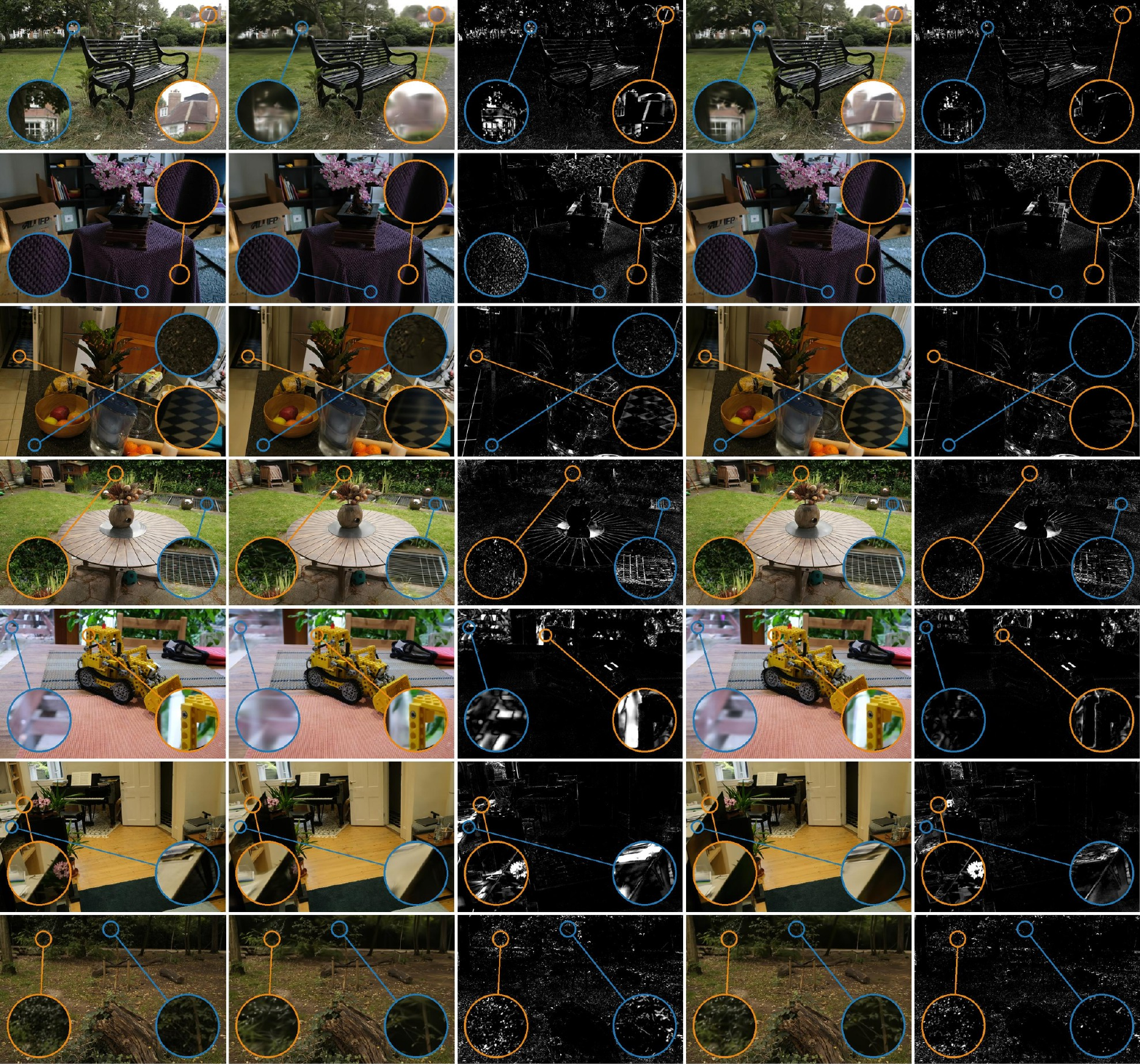}
    \put(3, -2.8){\small \textbf{Ground Truth}}
    \put(22, -2.8){\small \textbf{3DGS-MCMC}}
    \put(45, -2.8){\small \textbf{$L_1$ Error}}
    \put(67, -2.8){\small \textbf{Ours}}
    \put(84, -2.8){\small \textbf{$L_1$ Error}}
\end{overpic}
\par\vspace{0.7em} %
\caption{
All-scene qualitative comparison on Mip-NeRF360 at a 200k-Gaussian cap with
random initialization. Each row corresponds to one Mip-NeRF360 scene, and all
methods use identical test views and crop coordinates. This figure complements
the representative examples in Figure~\ref{fig:main_qualitative} by showing
that the visual improvement is not restricted to a small number of selected
scenes.
}
\label{fig:app_all_scene_mipnerf360}
\end{figure*}

\clearpage
\section*{NeurIPS Paper Checklist}
 
\begin{enumerate}

\item {\bf Claims}
    \item[] Question: Do the main claims made in the abstract and introduction accurately reflect the paper's contributions and scope?
    \item[] Answer: \answerYes{} %
    \item[] Justification: The abstract and Section~\ref{sec:introduction} state the proposed first-variation birth--death controller, its fixed-budget 3DGS scope, and the main empirical claim. The method is formalized in Section~\ref{sec:method}, especially Eqs.~\eqref{eq:method_first_variation_score}--\eqref{eq:mcmc_realization}, and the claims are supported by Table~\ref{tab:sota_comparison}, Table~\ref{tab:controller_ablation}, Figure~\ref{fig:main_psnr_budget_frontier}, and Appendices~\ref{app:dropin_mcmc}, \ref{app:per_scene_results}, and \ref{app:score_validation}.
    \item[] Guidelines:
    \begin{itemize}
        \item The answer \answerNA{} means that the abstract and introduction do not include the claims made in the paper.
        \item The abstract and/or introduction should clearly state the claims made, including the contributions made in the paper and important assumptions and limitations. A \answerNo{} or \answerNA{} answer to this question will not be perceived well by the reviewers. 
        \item The claims made should match theoretical and experimental results, and reflect how much the results can be expected to generalize to other settings. 
        \item It is fine to include aspirational goals as motivation as long as it is clear that these goals are not attained by the paper. 
    \end{itemize}

\item {\bf Limitations}
    \item[] Question: Does the paper discuss the limitations of the work performed by the authors?
    \item[] Answer: \answerYes{} %
    \item[] Justification: The paper states that the score is a local mass-coordinate descent surrogate rather than a finite-step global guarantee in Section~\ref{subsec:birth_death_score_gap} and Section~\ref{subsec:mcmc_realization}. Additional limitations and diagnostics are discussed in Appendix~\ref{app:mass_opacity_gradient}, Appendix~\ref{app:train_failure_case}, and Appendix~\ref{app:computation_tradeoff}.
    \item[] Guidelines:
    \begin{itemize}
        \item The answer \answerNA{} means that the paper has no limitation while the answer \answerNo{} means that the paper has limitations, but those are not discussed in the paper. 
        \item The authors are encouraged to create a separate ``Limitations'' section in their paper.
        \item The paper should point out any strong assumptions and how robust the results are to violations of these assumptions (e.g., independence assumptions, noiseless settings, model well-specification, asymptotic approximations only holding locally). The authors should reflect on how these assumptions might be violated in practice and what the implications would be.
        \item The authors should reflect on the scope of the claims made, e.g., if the approach was only tested on a few datasets or with a few runs. In general, empirical results often depend on implicit assumptions, which should be articulated.
        \item The authors should reflect on the factors that influence the performance of the approach. For example, a facial recognition algorithm may perform poorly when image resolution is low or images are taken in low lighting. Or a speech-to-text system might not be used reliably to provide closed captions for online lectures because it fails to handle technical jargon.
        \item The authors should discuss the computational efficiency of the proposed algorithms and how they scale with dataset size.
        \item If applicable, the authors should discuss possible limitations of their approach to address problems of privacy and fairness.
        \item While the authors might fear that complete honesty about limitations might be used by reviewers as grounds for rejection, a worse outcome might be that reviewers discover limitations that aren't acknowledged in the paper. The authors should use their best judgment and recognize that individual actions in favor of transparency play an important role in developing norms that preserve the integrity of the community. Reviewers will be specifically instructed to not penalize honesty concerning limitations.
    \end{itemize}

\item {\bf Theory assumptions and proofs}
    \item[] Question: For each theoretical result, does the paper provide the full set of assumptions and a complete (and correct) proof?
    \item[] Answer: \answerYes{} %
    \item[] Justification: Proposition~\ref{prop:qi_first_variation}, Proposition~\ref{prop:descent_lemma}, and Proposition~\ref{prop:low_opacity_bound} state their assumptions in Section~\ref{sec:method}. Complete proofs are provided in Appendix~\ref{app:deferred_proofs}.
    \item[] Guidelines:
    \begin{itemize}
        \item The answer \answerNA{} means that the paper does not include theoretical results. 
        \item All the theorems, formulas, and proofs in the paper should be numbered and cross-referenced.
        \item All assumptions should be clearly stated or referenced in the statement of any theorems.
        \item The proofs can either appear in the main paper or the supplemental material, but if they appear in the supplemental material, the authors are encouraged to provide a short proof sketch to provide intuition. 
        \item Inversely, any informal proof provided in the core of the paper should be complemented by formal proofs provided in appendix or supplemental material.
        \item Theorems and Lemmas that the proof relies upon should be properly referenced. 
    \end{itemize}

    \item {\bf Experimental result reproducibility}
    \item[] Question: Does the paper fully disclose all the information needed to reproduce the main experimental results of the paper to the extent that it affects the main claims and/or conclusions of the paper (regardless of whether the code and data are provided or not)?
    \item[] Answer: \answerYes{} %
    \item[] Justification: Section~\ref{subsec:experimental_settings} specifies datasets, metrics, budgets, baselines, and the matched-comparison protocol. Appendix~\ref{app:reproducibility} gives the code-faithful algorithm, hyperparameters, optimizer/training details, hardware/software environment, and the anonymized supplementary code reproduces the proposed method.
    \item[] Guidelines:
    \begin{itemize}
        \item The answer \answerNA{} means that the paper does not include experiments.
        \item If the paper includes experiments, a \answerNo{} answer to this question will not be perceived well by the reviewers: Making the paper reproducible is important, regardless of whether the code and data are provided or not.
        \item If the contribution is a dataset and\slash or model, the authors should describe the steps taken to make their results reproducible or verifiable. 
        \item Depending on the contribution, reproducibility can be accomplished in various ways. For example, if the contribution is a novel architecture, describing the architecture fully might suffice, or if the contribution is a specific model and empirical evaluation, it may be necessary to either make it possible for others to replicate the model with the same dataset, or provide access to the model. In general. releasing code and data is often one good way to accomplish this, but reproducibility can also be provided via detailed instructions for how to replicate the results, access to a hosted model (e.g., in the case of a large language model), releasing of a model checkpoint, or other means that are appropriate to the research performed.
        \item While NeurIPS does not require releasing code, the conference does require all submissions to provide some reasonable avenue for reproducibility, which may depend on the nature of the contribution. For example
        \begin{enumerate}
            \item If the contribution is primarily a new algorithm, the paper should make it clear how to reproduce that algorithm.
            \item If the contribution is primarily a new model architecture, the paper should describe the architecture clearly and fully.
            \item If the contribution is a new model (e.g., a large language model), then there should either be a way to access this model for reproducing the results or a way to reproduce the model (e.g., with an open-source dataset or instructions for how to construct the dataset).
            \item We recognize that reproducibility may be tricky in some cases, in which case authors are welcome to describe the particular way they provide for reproducibility. In the case of closed-source models, it may be that access to the model is limited in some way (e.g., to registered users), but it should be possible for other researchers to have some path to reproducing or verifying the results.
        \end{enumerate}
    \end{itemize}

\item {\bf Open access to data and code}
    \item[] Question: Does the paper provide open access to the data and code, with sufficient instructions to faithfully reproduce the main experimental results, as described in supplemental material?
    \item[] Answer: \answerYes{} %
    \item[] Justification: An anonymized supplementary code package is submitted with training and evaluation scripts, configuration files, dependency instructions, and commands for reproducing the proposed method. The standard benchmark datasets are not redistributed; the supplementary README gives public data-access and preprocessing instructions, and rows marked $\dagger$ use cited published benchmark numbers as described in Section~\ref{subsec:experimental_settings}. 
    \item[] Guidelines:
    \begin{itemize}
        \item The answer \answerNA{} means that paper does not include experiments requiring code.
        \item Please see the NeurIPS code and data submission guidelines (\url{https://neurips.cc/public/guides/CodeSubmissionPolicy}) for more details.
        \item While we encourage the release of code and data, we understand that this might not be possible, so \answerNo{} is an acceptable answer. Papers cannot be rejected simply for not including code, unless this is central to the contribution (e.g., for a new open-source benchmark).
        \item The instructions should contain the exact command and environment needed to run to reproduce the results. See the NeurIPS code and data submission guidelines (\url{https://neurips.cc/public/guides/CodeSubmissionPolicy}) for more details.
        \item The authors should provide instructions on data access and preparation, including how to access the raw data, preprocessed data, intermediate data, and generated data, etc.
        \item The authors should provide scripts to reproduce all experimental results for the new proposed method and baselines. If only a subset of experiments are reproducible, they should state which ones are omitted from the script and why.
        \item At submission time, to preserve anonymity, the authors should release anonymized versions (if applicable).
        \item Providing as much information as possible in supplemental material (appended to the paper) is recommended, but including URLs to data and code is permitted.
    \end{itemize}

\item {\bf Experimental setting/details}
    \item[] Question: Does the paper specify all the training and test details (e.g., data splits, hyperparameters, how they were chosen, type of optimizer) necessary to understand the results?
    \item[] Answer: \answerYes{} %
    \item[] Justification: Section~\ref{subsec:experimental_settings} specifies the benchmark datasets, held-out metrics, Gaussian caps, baseline protocol, and training length. Appendix~\ref{app:reproducibility}, Table~\ref{tab:app_config}, and Algorithm~\ref{alg:first_variation_birth_death} provide the controller interval, score batch size, EMA/clipping settings, opacity threshold, regularization weights, optimizer setting, GPU, CUDA/PyTorch versions, and evaluation metrics.
    \item[] Guidelines:
    \begin{itemize}
        \item The answer \answerNA{} means that the paper does not include experiments.
        \item The experimental setting should be presented in the core of the paper to a level of detail that is necessary to appreciate the results and make sense of them.
        \item The full details can be provided either with the code, in appendix, or as supplemental material.
    \end{itemize}

\item {\bf Experiment statistical significance}
    \item[] Question: Does the paper report error bars suitably and correctly defined or other appropriate information about the statistical significance of the experiments?
    \item[] Answer: \answerNo{} %
    \item[] Justification: The main fixed-budget reconstruction sweeps report dataset-level and per-scene averages rather than multi-seed error bars, because full matched multi-seed 3DGS sweeps are computationally expensive. The paper nevertheless provides controlled matched reruns, per-scene tables in Appendix~\ref{app:per_scene_results}, and mean $\pm$ standard-error score-validation probes in Figure~\ref{fig:app_utility_control_validation} and Table~\ref{tab:app_probe_view_split}.
    \item[] Guidelines:
    \begin{itemize}
        \item The answer \answerNA{} means that the paper does not include experiments.
        \item The authors should answer \answerYes{} if the results are accompanied by error bars, confidence intervals, or statistical significance tests, at least for the experiments that support the main claims of the paper.
        \item The factors of variability that the error bars are capturing should be clearly stated (for example, train/test split, initialization, random drawing of some parameter, or overall run with given experimental conditions).
        \item The method for calculating the error bars should be explained (closed form formula, call to a library function, bootstrap, etc.)
        \item The assumptions made should be given (e.g., Normally distributed errors).
        \item It should be clear whether the error bar is the standard deviation or the standard error of the mean.
        \item It is OK to report 1-sigma error bars, but one should state it. The authors should preferably report a 2-sigma error bar than state that they have a 96\% CI, if the hypothesis of Normality of errors is not verified.
        \item For asymmetric distributions, the authors should be careful not to show in tables or figures symmetric error bars that would yield results that are out of range (e.g., negative error rates).
        \item If error bars are reported in tables or plots, the authors should explain in the text how they were calculated and reference the corresponding figures or tables in the text.
    \end{itemize}

\item {\bf Experiments compute resources}
    \item[] Question: For each experiment, does the paper provide sufficient information on the computer resources (type of compute workers, memory, time of execution) needed to reproduce the experiments?
    \item[] Answer: \answerNo{} %
    \item[] Justification: Appendix~\ref{app:reproducibility} reports the GPU and software environment, and Appendix~\ref{app:budget_runtime} reports wall-clock training times and score-pass overhead in Figure~\ref{fig:app_quality_time} and Table~\ref{tab:app_pareto_values}. Complete peak-memory measurements and total project-level compute are not available for all experiments, and Table~\ref{tab:app_pareto_values} explicitly discloses omitted peak-memory/FPS logs.
    \item[] Guidelines:
    \begin{itemize}
        \item The answer \answerNA{} means that the paper does not include experiments.
        \item The paper should indicate the type of compute workers CPU or GPU, internal cluster, or cloud provider, including relevant memory and storage.
        \item The paper should provide the amount of compute required for each of the individual experimental runs as well as estimate the total compute. 
        \item The paper should disclose whether the full research project required more compute than the experiments reported in the paper (e.g., preliminary or failed experiments that didn't make it into the paper). 
    \end{itemize}
    
\item {\bf Code of ethics}
    \item[] Question: Does the research conducted in the paper conform, in every respect, with the NeurIPS Code of Ethics \url{https://neurips.cc/public/EthicsGuidelines}?
    \item[] Answer: \answerYes{} %
    \item[] Justification: The research conforms to the NeurIPS Code of Ethics: it uses public benchmark datasets, does not collect new human-subject data, does not involve crowdsourcing, and submits anonymized supplementary code for review. The work is a reconstruction-method paper and does not release private data or high-risk pretrained models.
    \item[] Guidelines:
    \begin{itemize}
        \item The answer \answerNA{} means that the authors have not reviewed the NeurIPS Code of Ethics.
        \item If the authors answer \answerNo, they should explain the special circumstances that require a deviation from the Code of Ethics.
        \item The authors should make sure to preserve anonymity (e.g., if there is a special consideration due to laws or regulations in their jurisdiction).
    \end{itemize}

\item {\bf Broader impacts}
    \item[] Question: Does the paper discuss both potential positive societal impacts and negative societal impacts of the work performed?
    \item[] Answer: \answerYes{} %
    \item[] Justification: Section~\ref{sec:introduction} notes positive applications such as immersive rendering, robotics, digital humans, and geometric reconstruction. Potential negative uses include more accurate reconstruction of privacy-sensitive scenes or surveillance settings; this work mitigates direct risk by using only public benchmarks and releasing research code rather than private scene data or sensitive pretrained assets.
    \item[] Guidelines:
    \begin{itemize}
        \item The answer \answerNA{} means that there is no societal impact of the work performed.
        \item If the authors answer \answerNA{} or \answerNo, they should explain why their work has no societal impact or why the paper does not address societal impact.
        \item Examples of negative societal impacts include potential malicious or unintended uses (e.g., disinformation, generating fake profiles, surveillance), fairness considerations (e.g., deployment of technologies that could make decisions that unfairly impact specific groups), privacy considerations, and security considerations.
        \item The conference expects that many papers will be foundational research and not tied to particular applications, let alone deployments. However, if there is a direct path to any negative applications, the authors should point it out. For example, it is legitimate to point out that an improvement in the quality of generative models could be used to generate Deepfakes for disinformation. On the other hand, it is not needed to point out that a generic algorithm for optimizing neural networks could enable people to train models that generate Deepfakes faster.
        \item The authors should consider possible harms that could arise when the technology is being used as intended and functioning correctly, harms that could arise when the technology is being used as intended but gives incorrect results, and harms following from (intentional or unintentional) misuse of the technology.
        \item If there are negative societal impacts, the authors could also discuss possible mitigation strategies (e.g., gated release of models, providing defenses in addition to attacks, mechanisms for monitoring misuse, mechanisms to monitor how a system learns from feedback over time, improving the efficiency and accessibility of ML).
    \end{itemize}
    
\item {\bf Safeguards}
    \item[] Question: Does the paper describe safeguards that have been put in place for responsible release of data or models that have a high risk for misuse (e.g., pre-trained language models, image generators, or scraped datasets)?
    \item[] Answer: \answerNA{} %
    \item[] Justification: The paper does not release a high-risk pretrained model, image generator, language model, scraped dataset, or private dataset. The released asset is anonymized research code for training and evaluating a 3D reconstruction method on public benchmarks.
    \item[] Guidelines:
    \begin{itemize}
        \item The answer \answerNA{} means that the paper poses no such risks.
        \item Released models that have a high risk for misuse or dual-use should be released with necessary safeguards to allow for controlled use of the model, for example by requiring that users adhere to usage guidelines or restrictions to access the model or implementing safety filters. 
        \item Datasets that have been scraped from the Internet could pose safety risks. The authors should describe how they avoided releasing unsafe images.
        \item We recognize that providing effective safeguards is challenging, and many papers do not require this, but we encourage authors to take this into account and make a best faith effort.
    \end{itemize}

\item {\bf Licenses for existing assets}
    \item[] Question: Are the creators or original owners of assets (e.g., code, data, models), used in the paper, properly credited and are the license and terms of use explicitly mentioned and properly respected?
    \item[] Answer: \answerYes{} %
    \item[] Justification: The paper cites the original datasets, metrics, baselines, and codebases used in Section~\ref{subsec:experimental_settings} and the references. The supplementary README lists the corresponding data/code sources, versions or commits when available, licenses or terms of use, and states that benchmark datasets are not redistributed.
    \item[] Guidelines:
    \begin{itemize}
        \item The answer \answerNA{} means that the paper does not use existing assets.
        \item The authors should cite the original paper that produced the code package or dataset.
        \item The authors should state which version of the asset is used and, if possible, include a URL.
        \item The name of the license (e.g., CC-BY 4.0) should be included for each asset.
        \item For scraped data from a particular source (e.g., website), the copyright and terms of service of that source should be provided.
        \item If assets are released, the license, copyright information, and terms of use in the package should be provided. For popular datasets, \url{paperswithcode.com/datasets} has curated licenses for some datasets. Their licensing guide can help determine the license of a dataset.
        \item For existing datasets that are re-packaged, both the original license and the license of the derived asset (if it has changed) should be provided.
        \item If this information is not available online, the authors are encouraged to reach out to the asset's creators.
    \end{itemize}

\item {\bf New assets}
    \item[] Question: Are new assets introduced in the paper well documented and is the documentation provided alongside the assets?
    \item[] Answer: \answerYes{} %
    \item[] Justification: The new asset is the anonymized supplementary source code. The code package includes documentation, environment setup, training/evaluation commands, configuration files, and license/terms information.
    \item[] Guidelines:
    \begin{itemize}
        \item The answer \answerNA{} means that the paper does not release new assets.
        \item Researchers should communicate the details of the dataset\slash code\slash model as part of their submissions via structured templates. This includes details about training, license, limitations, etc. 
        \item The paper should discuss whether and how consent was obtained from people whose asset is used.
        \item At submission time, remember to anonymize your assets (if applicable). You can either create an anonymized URL or include an anonymized zip file.
    \end{itemize}

\item {\bf Crowdsourcing and research with human subjects}
    \item[] Question: For crowdsourcing experiments and research with human subjects, does the paper include the full text of instructions given to participants and screenshots, if applicable, as well as details about compensation (if any)? 
    \item[] Answer: \answerNA{} %
    \item[] Justification: The paper does not involve crowdsourcing, user studies, human-subject experiments, or newly collected human-subject data. All experiments use existing public benchmark datasets.
    \item[] Guidelines:
    \begin{itemize}
        \item The answer \answerNA{} means that the paper does not involve crowdsourcing nor research with human subjects.
        \item Including this information in the supplemental material is fine, but if the main contribution of the paper involves human subjects, then as much detail as possible should be included in the main paper. 
        \item According to the NeurIPS Code of Ethics, workers involved in data collection, curation, or other labor should be paid at least the minimum wage in the country of the data collector. 
    \end{itemize}

\item {\bf Institutional review board (IRB) approvals or equivalent for research with human subjects}
    \item[] Question: Does the paper describe potential risks incurred by study participants, whether such risks were disclosed to the subjects, and whether Institutional Review Board (IRB) approvals (or an equivalent approval/review based on the requirements of your country or institution) were obtained?
    \item[] Answer: \answerNA{} %
    \item[] Justification: The paper does not involve crowdsourcing or research with human subjects, so IRB approval or equivalent review is not applicable.
    \item[] Guidelines:
    \begin{itemize}
        \item The answer \answerNA{} means that the paper does not involve crowdsourcing nor research with human subjects.
        \item Depending on the country in which research is conducted, IRB approval (or equivalent) may be required for any human subjects research. If you obtained IRB approval, you should clearly state this in the paper. 
        \item We recognize that the procedures for this may vary significantly between institutions and locations, and we expect authors to adhere to the NeurIPS Code of Ethics and the guidelines for their institution. 
        \item For initial submissions, do not include any information that would break anonymity (if applicable), such as the institution conducting the review.
    \end{itemize}

\item {\bf Declaration of LLM usage}
    \item[] Question: Does the paper describe the usage of LLMs if it is an important, original, or non-standard component of the core methods in this research? Note that if the LLM is used only for writing, editing, or formatting purposes and does \emph{not} impact the core methodology, scientific rigor, or originality of the research, declaration is not required.
    \item[] Answer: \answerNA{} %
    \item[] Justification: LLMs are not an important, original, or non-standard component of the core method, experiments, code, or analysis. Any use of LLMs only for writing, editing, or formatting does not affect the methodology, scientific rigor, or originality of the research.
    \item[] Guidelines:
    \begin{itemize}
        \item The answer \answerNA{} means that the core method development in this research does not involve LLMs as any important, original, or non-standard components.
        \item Please refer to our LLM policy in the NeurIPS handbook for what should or should not be described.
    \end{itemize}

\end{enumerate}

\end{document}